\documentclass{article}

\usepackage[table,xcdraw]{xcolor} 
\usepackage{microtype}
\usepackage{graphicx}
\usepackage{subcaption}
\usepackage{booktabs} 

\usepackage{hyperref}

\usepackage[preprint]{icml2026}

\usepackage{amsmath}
\usepackage{amssymb}
\usepackage{mathtools}
\usepackage{amsthm}

\usepackage{multirow}
\usepackage{threeparttable}

\usepackage[capitalize,noabbrev]{cleveref}

\theoremstyle{plain}
\newtheorem{theorem}{Theorem}[section]
\newtheorem{proposition}[theorem]{Proposition}
\newtheorem{lemma}[theorem]{Lemma}

\theoremstyle{definition}

\theoremstyle{remark}

\usepackage[textsize=tiny]{todonotes}

\begin{document}

\icmltitlerunning{RIGA-Fold: A General Framework for Protein Inverse Folding}

\twocolumn[
  \icmltitle{RIGA-Fold: A General Framework for Protein Inverse Folding 
             via Recurrent Interaction and Geometric Awareness}

  \icmlsetsymbol{equal}{*}

    \begin{icmlauthorlist}
      \icmlauthor{Sisi Yuan}{lab}
      \icmlauthor{Jiehuang Chen}{szu}
      \icmlauthor{Junchuang Cai}{lab}
      \icmlauthor{Dong Xu}{lab}
      \icmlauthor{Xueliang Li}{lab}
      \icmlauthor{Zexuan Zhu}{lab}
      \icmlauthor{Junkai Ji}{lab}
    \end{icmlauthorlist}
    \icmlaffiliation{szu}{College of Computer Science and Software Engineering, Shenzhen University, Shenzhen, China}
    \icmlaffiliation{lab}{National Engineering Laboratory for Big Data System Computing Technology, Shenzhen University, Shenzhen, China}
    
    \icmlcorrespondingauthor{Junkai Ji}{jijunkai@szu.edu.cn}

  \icmlkeywords{Protein Inverse Folding, Geometric Deep Learning, Graph Neural Networks, Structure-based Protein Design, SE(3)-invariance}

  \vskip 0.3in
]

\printAffiliationsAndNotice{}

\begin{abstract} 
Protein inverse folding, the task of predicting amino acid sequences for desired structures, is pivotal for de novo protein design. However, existing GNN-based methods typically suffer from restricted receptive fields that miss long-range dependencies and a "single-pass" inference paradigm that leads to error accumulation. To address these bottlenecks, we propose RIGA-Fold, a framework that synergizes Recurrent Interaction with Geometric Awareness. At the micro-level, we introduce a Geometric Attention Update (GAU) module where edge features explicitly serve as attention keys, ensuring strictly SE(3)-invariant local encoding. At the macro-level, we design an attention-based Global Context Bridge that acts as a soft gating mechanism to dynamically inject global topological information. Furthermore, to bridge the gap between structural and sequence modalities, we introduce an enhanced variant, RIGA-Fold*, which integrates trainable geometric features with frozen evolutionary priors from ESM-2 and ESM-IF via a dual-stream architecture. Finally, a biologically inspired ``predict-recycle-refine'' strategy is implemented to iteratively denoise sequence distributions. Extensive experiments on CATH 4.2, TS50, and TS500 benchmarks demonstrate that our geometric framework is highly competitive, while RIGA-Fold* significantly outperforms state-of-the-art baselines in both sequence recovery and structural consistency. 
\end{abstract}

\section{Introduction}

Protein inverse folding, which generates amino acid sequences for target backbones, is a pivotal challenge in computational protein design~\cite{ingraham2019generative, dauparas2022robust}. By optimizing properties like binding affinity and thermostability under structural constraints, it serves as a cornerstone for drug discovery~\cite{imrie2020deep, ovchinnikov2021structure}, enzyme engineering~\cite{mazurenko2020machine}, and synthetic biology~\cite{yeh2023novo, cao2022design}. Recent breakthroughs like AlphaFold2/3~\cite{jumper2021highly, abramson2024accurate} and RoseTTAFold~\cite{baek2021accurate} have significantly increased the availability of high-quality protein backbones. This massive influx of structural data necessitates accurate and efficient inverse folding models to decode these predicted structures into functional sequences~\cite{ferruz2022controllable}. However, despite progress, existing methods still struggle to effectively capture complex long-range dependencies and global biological constraints.

Conventional approaches, exemplified by RosettaDesign~\cite{leaver2011rosetta3, alford2017rosetta}, rely on the optimization of physics-based energy functions. However, their accuracy is inherently constrained by the approximations in hand-crafted energy potentials~\cite{dahiyat1997probing}, while the combinatorial complexity of rotamer packing renders them computationally intensive for large-scale applications. To leverage growing structural databases, the field has shifted towards data-driven deep learning~\cite{ovchinnikov2021structure}. Contemporary models, such as ProteinMPNN~\cite{dauparas2022robust}, PiFold~\cite{gao2022pifold}, and GVP-GNN~\cite{jing2020learning}, employ graph neural networks (GNNs) to predict sequences via message passing on backbone graphs. While these methods have outperformed statistical potentials, they face three fundamental limitations. 

First, message passing on $k$-nearest neighbor ($k$-NN) graphs is inherently local. Stacking multiple layers to capture long-range dependencies—critical for functional sites like allosteric pockets—often induces the over-smoothing problem~\cite{li2018deeper, oono2019graph}, where node representations become indistinguishable, thereby degrading performance. Second, treating geometric edge features (distances, angles) merely as auxiliary inputs underutilizes fine-grained structural cues~\cite{jing2021equivariant}. This hinders the rectification of representations within a single forward pass. Crucially, most architectures lack iterative refinement mechanisms~\cite{wang2023lm}. Consequently, prediction biases from early layers propagate unchecked without global context feedback. These issues highlight that inverse folding is an underspecified problem governed by evolutionary and semantic patterns, not just geometry~\cite{yi2023graph}. Thus, relying solely on local geometric message passing is insufficient. Integrating semantic priors from large-scale sequence data is imperative to enable error correction and capture long-range dependencies.

To overcome the limitations of pure geometric modeling, recent studies have begun to integrate pre-trained protein language models (PLMs) into geometric architectures for semantic guidance~\cite{elnaggar2021prottrans, rives2021biological}. PLMs like ESM-2~\cite{lin2023evolutionary} and ESM-IF~\cite{hsu2022learning} provide rich evolutionary and structural priors, which constrain the underspecified solution space and augment geometric encoders. However, existing static integration strategies, such as naïve feature concatenation~\cite{zhang2023survey}, fail to capture global dependencies effectively. Specifically, these approaches lack explicit pathways for long-range interactions and internal feedback loops. Consequently, priors remain static, failing to adapt to intermediate predictions. Therefore, it is essential to establish a framework where geometric structures modulate information flow. By selectively injecting long-range context via iterative refinement, such a framework enables dynamic prior updates and explicit error correction.

To address the aforementioned challenges, we propose RIGA-Fold, a geometric inverse folding framework designed to maximize structural information extraction through global-local collaboration. First, to exploit underutilized geometric cues, we introduce a Geometric Attention Update (GAU) module. Employing an ``Edge-as-Key'' mechanism, it integrates edge features as attention keys rather than auxiliary bias terms. This preserves SE(3)-invariance~\cite{fuchs2020se3} and enables the precise capture of subtle variations like sidechain stacking. Second, to mitigate over-smoothing from limited receptive fields, we introduce a Global Context Bridge. This module creates direct pathways via adaptive dual-gating, dynamically injecting global context based on node-specific needs to capture long-range dependencies.

Furthermore, to transcend the limitations of pure geometric modeling and leverage evolutionary semantics, we present an enhanced variant, RIGA-Fold*. Building upon the geometric foundation of RIGA-Fold, this variant incorporates an {iterative refinement mechanism via dual-stream fusion. In this closed-loop system, ESM-IF serves as a static structural anchor, while ESM-2 acts as a dynamic semantic updater based on intermediate predictions. By integrating static structural features with dynamic semantic information, RIGA-Fold* can progressively correct fine-grained errors, leading to a coarse-to-fine generation process.

We evaluate both versions on standard benchmarks, including CATH 4.2~\cite{ingraham2019generative}, TS50, and TS500. Empirical results demonstrate that our framework delivers superior recovery rates and consistently lower perplexity compared to existing baselines.

The main contributions of this paper are summarized as follows:
\begin{itemize}
    \item We propose RIGA-Fold, a robust geometric framework integrating an ``Edge-as-Key'' mechanism (GAU) and a Global Context Bridge. This architecture effectively mitigates the limited geometric awareness and long-range dependency issues of traditional GNNs.
    \item We introduce RIGA-Fold*, an enhanced variant featuring a closed-loop inference strategy. By leveraging dual-stream feedback from ESM-2 and ESM-IF, it dynamically updates semantic priors to significantly outperform single-pass baselines.
    \item Extensive experiments show that RIGA-Fold* achieves state-of-the-art sequence recovery and structural consistency, surpassing leading methods like PiFold. Furthermore, employing noise-augmented training ensures superior robustness on low-homology targets.
\end{itemize}

\section{Related Work}\label{sec:related_work}
Protein inverse folding has evolved from physics-based energy optimization~\cite{leaver2011rosetta3} to deep learning paradigms. Current state-of-the-art methods predominantly employ Graph Neural Networks (GNNs)~\cite{ingraham2019generative, jing2020learning, dauparas2022robust, gao2022pifold} to model structural constraints. However, these models often underutilize geometric edge features and struggle with long-range dependencies due to the \textit{oversquashing} effect~\cite{alon2020bottleneck}. While some approaches incorporate pre-trained Protein Language Models (PLMs)~\cite{hsu2022learning, zheng2023structure}, they frequently rely on static integration or sequential refinement. In contrast, RIGA-Fold introduces a \textit{Geometric Attention Update} and an \textit{Attention-based Global Context Bridge} to dynamically integrate geometric and evolutionary priors. A comprehensive discussion of the related literature is provided in Appendix~\ref{sec:appendix_related_work}.

\section{Method}\label{sec:method}

\begin{figure*}[t] 
    \centering
    \includegraphics[width=1.0\textwidth]{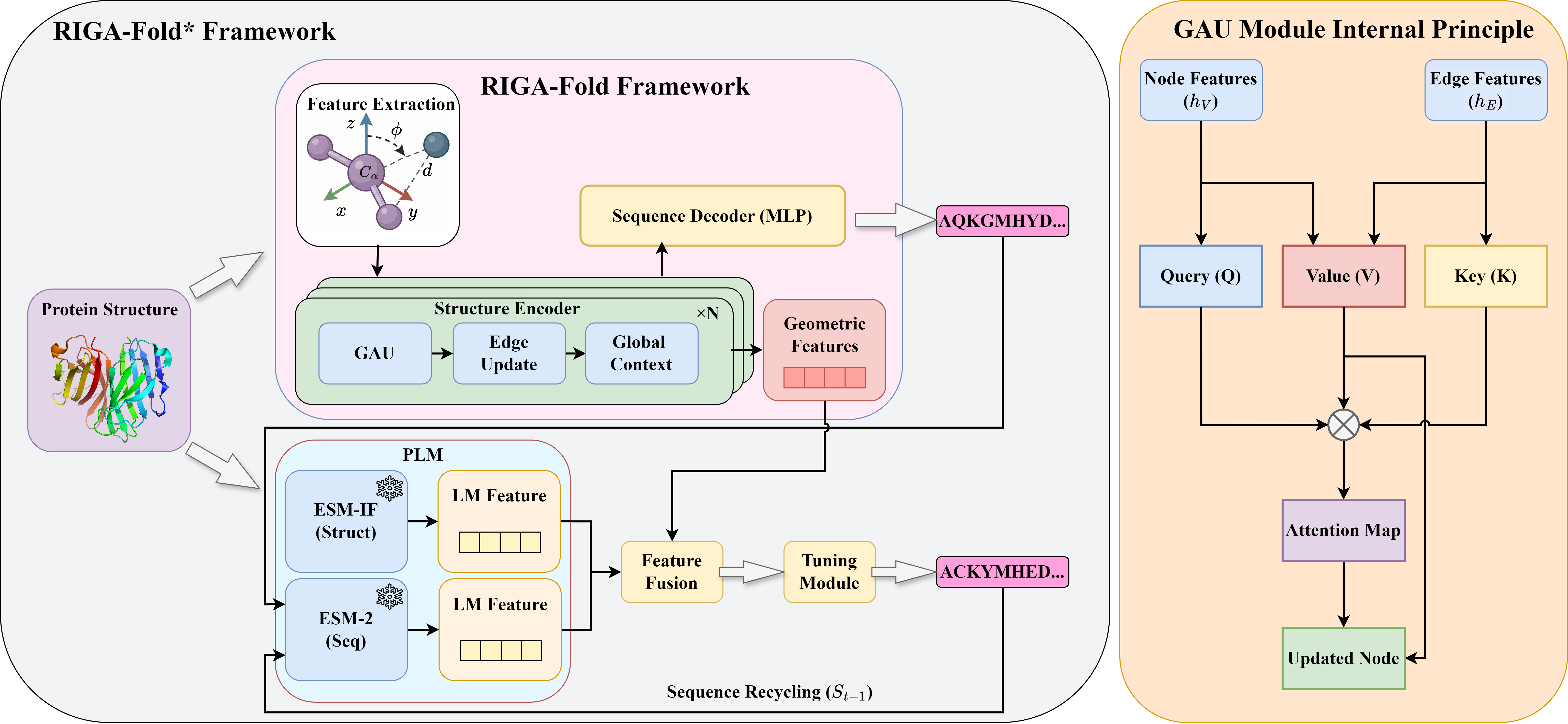} 
    
    \vspace{0.1in} 
    
    \caption{\textbf{Overall architecture of RIGA-Fold and its enhanced variant RIGA-Fold*.} 
    \textbf{(Left)} The RIGA-Fold* Framework. This macro-level system incorporates the RIGA-Fold structure encoder (inner panel) alongside a PLM-based dual-stream module. A \textit{sequence recycling} feedback loop is implemented to iteratively refine the predicted sequence $S_{t-1}$. 
    \textbf{(Right)} The GAU Mechanism. A detailed view of the core interaction within RIGA-Fold, where explicit geometric edge features ($h_E$) drive the attention update to ensure SE(3)-invariant information flow.}
    
    \label{fig:framework}
    \vspace{0.1in} 
\end{figure*}

The overall architecture of our proposed framework is illustrated in Figure~\ref{fig:framework}. We first introduce RIGA-Fold, a geometric-aware model designed to maximize structural information extraction. As detailed in the Right Panel, RIGA-Fold’s single layer comprises a Geometric Attention Update (GAU) for local feature interaction and a Global Context Bridge for long-range dependency modeling. Building upon this structural backbone, we present RIGA-Fold*, an enhanced variant (shown in the Left Panel) that adopts an Iterative Self-Correction paradigm. RIGA-Fold* synergizes the geometric constraints from RIGA-Fold with evolutionary priors from protein language models via a dual-stream pathway and a sequence recycling mechanism.

\subsection{Graph Construction and Featurization}
Following standard protocols~\cite{jing2020learning,gao2022pifold}, we represent the protein as a $k$-NN graph ($\mathcal{G}$). To ensure SE(3)-invariance, we construct a local coordinate system (LCS) for each residue and compute invariant geometric features (torsion angles and RBF-encoded distances). Detailed definitions of input featurization are provided in Appendix~\ref{sec:features}.

\subsection{The Structure Encoder Layer}\label{sec:encoder}

Each encoder layer consists of three essential components: the Geometric Attention Update (GAU) for local node interaction, the Dynamic Edge Update for refining structural constraints, and the Global Context Bridge for global information aggregation.

\subsubsection{Geometric Attention Update (GAU)}
Standard Graph Attention Networks (GATs) typically derive attention scores solely from node features. However, in protein structures, the spatial relationship (edge) is the primary determinant of residue interaction strength.

To explicitly model this physical intuition, we design the GAU module where geometric edge features $\mathbf{e}_{ji}$ serve as the \textit{Attention Keys}. As illustrated in the Right Panel of Figure~\ref{fig:framework}, for a center node $i$ and neighbor $j$, we compute the query $\mathbf{q}_i$, key $\mathbf{k}_{ji}$, and value $\mathbf{v}_{ji}$ as:
\begin{equation}
\begin{split}
    \mathbf{q}_i &= \mathbf{W}_Q \mathbf{h}_i, \\
    \mathbf{k}_{ji} &= \mathbf{W}_K \mathbf{e}_{ji},  \\
    \mathbf{v}_{ji} &= \mathbf{W}_V [\mathbf{h}_i \parallel \mathbf{e}_{ji} \parallel \mathbf{h}_j].
\end{split}
\end{equation}
Here, $\parallel$ denotes concatenation. The attention coefficient $\alpha_{ji}$ is computed by projecting the query onto the geometric key, ensuring that the message passing is structurally guided:
\begin{equation}
    \alpha_{ji} = \text{softmax}_j \left( \frac{\mathbf{q}_i^\top \mathbf{k}_{ji}}{\sqrt{d_k}} \right).
\end{equation}
The local node update is aggregated as $\mathbf{h}_i^{local} = \sum_{j \in \mathcal{N}(i)} \alpha_{ji} \mathbf{v}_{ji}$. 

\subsubsection{Dynamic Edge Update}
Static edge features are insufficient for capturing the evolving geometric context in deep networks. Crucially, to mitigate the loss of structural information during message passing, we concurrently update the geometric edge features using the aggregated context:
\begin{equation}
    \mathbf{e}_{ji}^{new} = \mathbf{e}_{ji} + \text{MLP}_{edge}([\mathbf{h}_i \parallel \mathbf{h}_j \parallel \mathbf{e}_{ji}]).
\end{equation}
This operation ensures that the explicit spatial constraints (distance, orientation) are dynamically refined by the evolving node representations, allowing the model to maintain geometric sensitivity even in deeper layers.

\subsubsection{Global Context Bridge}
To capture global topology (e.g., allosteric couplings) beyond the local $k$-NN receptive field, we append a Global Context Bridge after the GAU.

We utilize a \textit{Feature-wise Attention} mechanism to compute a global pooled vector $\mathbf{g}_{pool}$. Unlike standard pooling, this computes an importance weight for each node:
\begin{equation}
    \mathbf{s}_i = \mathbf{W}_{att} \mathbf{h}_i^{local}, \quad 
    \alpha_i = \frac{\exp(\mathbf{s}_i)}{\sum_{k \in \mathcal{V}} \exp(\mathbf{s}_k)},
\end{equation}
\begin{equation}
    \mathbf{g}_{pool} = \sum_{i=1}^{N} \alpha_i \odot (\mathbf{W}_{val} \mathbf{h}_i^{local}).
\end{equation}

Crucially, to ensure the global context is tailored to each residue's local environment, we employ an \textit{adaptive dual-gating} strategy. First, we construct a node-specific context vector $\mathbf{z}_i$ by fusing the global pooled features with local features, modulated by an input-dependent gate:
\begin{equation}
    \mathbf{u}_i = \mathrm{MLP}_{up}(\mathbf{h}_i^{local} \parallel \mathbf{g}_{pool}),
\end{equation}
\begin{equation}
    \mathbf{z}_i = \mathbf{u}_i \odot \sigma(\mathrm{MLP}_{in}(\mathbf{h}_i^{local})).
\end{equation}
Finally, this refined context drives the output gating mechanism:
\begin{equation}
    \mathbf{h}_i^{out} = \mathbf{h}_i^{local} \odot \sigma(\mathrm{MLP}_{out}(\mathbf{z}_i)).
\end{equation}
Here, $\sigma(\cdot)$ denotes the Sigmoid activation. This hierarchical gating creates a ``virtual bridge'' where global signals are dynamically filtered twice—first by the node's intrinsic state, and then by the learned projection—ensuring precise injection of long-range dependencies.

\begin{table*}[t] 
\centering
\caption{Performance comparison on CATH 4.2. Lower PPL is better; higher Recovery is better.}
\label{tab1}
\fontsize{8}{10}\selectfont

\begin{threeparttable}

    \begin{tabular}{@{}lcccccccc@{}}
    \toprule
    \multirow{2}{*}{Method} & \multicolumn{3}{c}{Perplexity $\downarrow$} & \multicolumn{3}{c}{Recovery \% $\uparrow$} & \multicolumn{2}{c}{CATH version} \\
    \cmidrule(lr){2-4} \cmidrule(lr){5-7} \cmidrule(lr){8-9}
    & Short & Single-chain & All & Short & Single-chain & All & 4.2 & 4.3 \\
    \midrule
    StructGNN~\cite{ingraham2019generative} & 8.29 & 8.74 & 6.40 & 29.44 & 28.26 & 35.91 & \checkmark & \\
    GraphTrans~\cite{ingraham2019generative} & 8.39 & 8.83 & 6.63 & 28.44 & 28.46 & 35.82 & \checkmark & \\
    GCA~\cite{tan2022generative} & 7.09 & 7.49 & 6.05 & 32.62 & 31.10 & 37.64 & \checkmark & \\
    GVP~\cite{jing2020learning} & 7.23 & 7.84 & 5.36 & 30.60 & 28.95 & 39.47 & \checkmark & \\
    GVP-large$^\dagger$~\cite{hsu2022learning} & 7.68 & 6.12 & 6.17 & 32.60 & 39.40 & 39.20 &  & \checkmark \\
    AlphaDesign~\cite{gao2022alphadesign} & 7.32 & 7.63 & 6.30 & 34.16 & 32.66 & 41.31 & \checkmark & \\
    ESM-IF$^\dagger$~\cite{hsu2022learning} & 8.18 & 6.33 & 6.44 & 31.30 & 38.50 & 38.30 &  & \checkmark \\
    ProteinMPNN~\cite{dauparas2022robust} & 6.21 & 6.68 & 4.61 & 36.35 & 34.43 & 45.96 & \checkmark & \\
    PiFold~\cite{gao2022pifold} & 6.04 & 6.31 & 4.55 & 39.84 & 38.53 & 51.66 & \checkmark & \\
    VFN-IF~\cite{mao2023novo} & 5.70 & 5.86 & 4.17 & 41.34 & 40.98 & 54.74 & \checkmark & \\
    SPIN-CGNN~\cite{zhang2023spin} & 5.01 & 5.02 & 4.05 & 44.71 & 43.37 & 54.81 & \checkmark & \\
    LM-Design~\cite{zheng2023structure} & 6.77 & 6.46 & 4.52 & 37.88 & 42.47 & 55.65 & \checkmark & \\
    Knowledge-Design~\cite{gao2024kwdesign} & 5.48 & 5.16 & 3.46 & 44.66 & 45.45 & 60.77 & \checkmark & \\
    ScFold~\cite{zhong2025scfold} & 5.80 & 5.99 & 4.61 & 41.66 & 40.10 & 52.22 & \checkmark & \\
    \midrule
    \textbf{RIGA-Fold} & 5.00 & 4.97 & 4.13 & 44.76 & 42.16 & 55.05 & \checkmark & \\
    \textbf{RIGA-Fold*} & \textbf{4.61} & \textbf{4.25} & \textbf{3.37} & \textbf{50.00} & \textbf{49.59} & \textbf{61.39} & \checkmark & \\
    \bottomrule
    \end{tabular}

    \begin{tablenotes}
        \footnotesize
        \item \textit{Note:} $^\dagger$ denotes models trained/evaluated on the CATH 4.3 dataset.
    \end{tablenotes}

\end{threeparttable}
\end{table*}

\begin{table}[t]
\caption{Zero-shot generalization on TS50 and TS500.}
\label{tab2}
\centering
\setlength{\tabcolsep}{1.5pt} 

\fontsize{8}{10}\selectfont
\begin{tabular}{@{}l c c c c@{}}
\toprule
\multirow{2}{*}{Method} & \multicolumn{2}{c}{TS50} & \multicolumn{2}{c}{TS500} \\
\cmidrule(lr){2-3} \cmidrule(lr){4-5}
 & PPL $\downarrow$ & Recovery \% $\uparrow$ & PPL $\downarrow$ & Recovery \% $\uparrow$ \\
\midrule
StructGNN~\yrcite{ingraham2019generative} & 5.40 & 43.89 & 4.98 & 45.69 \\
GraphTrans~\yrcite{ingraham2019generative} & 5.60 & 42.20 & 5.16 & 44.66 \\
GVP~\yrcite{jing2020learning} & 4.71 & 44.14 & 4.20 & 49.14 \\
GCA~\yrcite{tan2022generative} & 5.09 & 47.02 & 4.72 & 47.74 \\
AlphaDesign~\yrcite{gao2022alphadesign} & 5.25 & 48.36 & 4.93 & 49.23 \\
ProteinMPNN~\yrcite{dauparas2022robust} & 3.93 & 54.43 & 3.53 & 58.08 \\
PiFold~\yrcite{gao2022pifold} & 3.86 & 58.72 & 3.44 & 60.42 \\
VFN-IF~\yrcite{mao2023novo} & 3.58 & 59.54 & 3.19 & 63.65 \\
LM-Design~\yrcite{zheng2023structure} & 3.50 & 57.89 & 3.19 & 67.78 \\
Knowledge-Design~\yrcite{gao2024kwdesign} & 3.10 & 62.79 & 2.86 & 69.19 \\
ScFold~\yrcite{zhong2025scfold} & 3.71 & 59.32 & 3.52 & 60.46   \\
\cmidrule{1-5}
\textbf{RIGA-Fold} & 3.58 & 61.12 & 3.01 & 64.02 \\
\textbf{RIGA-Fold*} & \textbf{2.95} & \textbf{65.74} & \textbf{2.77} & \textbf{70.15} \\
\bottomrule
\end{tabular}
\end{table}

\subsection{Dual-Stream Fusion with Pretrained Priors}\label{sec:priors}

To bridge the gap between limited structural data and abundant sequence data, we utilize a \textit{dual-stream input pipeline} as shown in the Blue PLM Module of Figure~\ref{fig:framework}. We leverage ESM-2 (sequence-based) and ESM-IF (structure-based) as frozen feature extractors.

The geometric features $\mathbf{h}_{geom}$ derived from our encoder are fused with the semantic embeddings via the Feature Fusion module shown in Figure~\ref{fig:framework}:
\begin{equation}\label{eq:fusion}
    \mathbf{h}_i^{fusion} = \mathrm{Concat}(\mathbf{h}_{geom,i}, \mathbf{E}_{\mathrm{ESM\text{-}IF}, i}, \mathbf{E}_{\mathrm{ESM\text{-}2}, i}).
\end{equation}
Subsequently, the fused features are processed by the Tuning Module to generate the final probability distribution. This fusion mechanism serves as the structural foundation for the iterative refinement strategy described below.

\subsection{Iterative Self-Correction Strategy}\label{sec:recycling}

A key limitation of existing inverse folding models is their ``single-pass'' nature. Inspired by the dynamic refinement process in protein folding, we implement a \textit{Cascaded Refinement Strategy} (Recycling).

We construct a cascade of $T$ RIGA-Fold modules (typically $T=3$). In the initial stage ($t=1$), while the ESM-IF structural prior is active, the ESM-2 sequence prior is initialized with a generic token since no candidate sequence has been predicted. For subsequent stages ($t > 1$), the predicted probability distribution $\mathbf{P}_{t-1}$ from the preceding step is decoded into a candidate sequence $S_{t-1}$, which is then fed back into the frozen ESM-2 encoder. This feedback pathway (illustrated in Figure~\ref{fig:framework}) serves as a dynamic semantic prior, enabling the model to progressively denoise the sequence distribution by leveraging updated evolutionary constraints.

\subsection{Objective Function}

We train the model using the sum of Cross-Entropy losses calculated at every refinement stage $t$, rather than just the final output, to ensure strong supervision across the entire cascade:
\begin{equation}
    \mathcal{L} = -\frac{1}{N} \sum_{t=1}^{T} \sum_{i=1}^{N} \sum_{c=1}^{20} y_{i,c}^{true} \log P_t(y_i = c | \mathcal{G}).
\end{equation}

\begin{figure*}[t]
  \centering
  \includegraphics[width=1\textwidth]{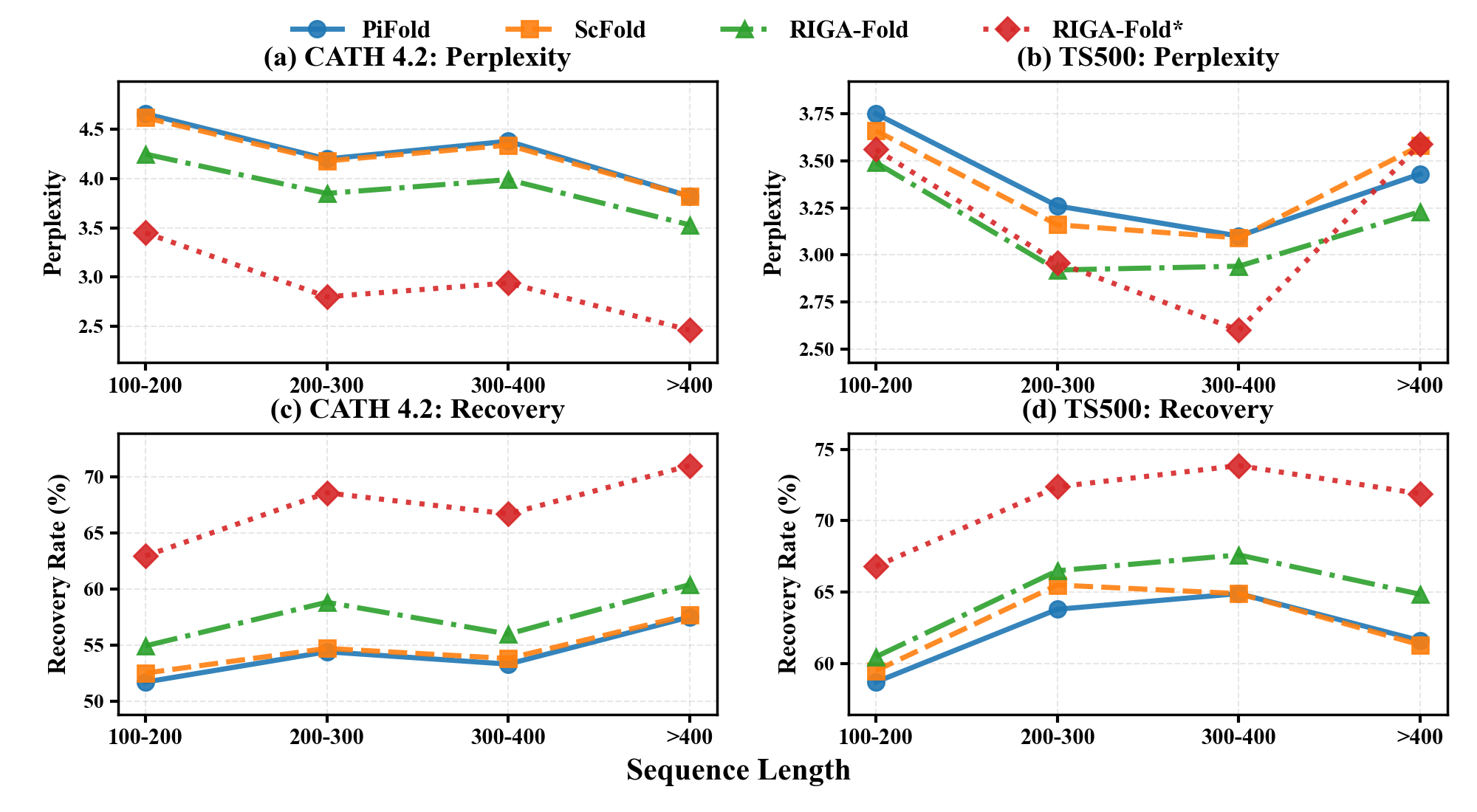}
  \vspace{-0.1in} 
  \caption{\textbf{Performance analysis across different sequence lengths.} 
  Results on CATH 4.2 (left column) and TS500 (right column). 
  Top row shows Perplexity (lower is better), bottom row shows Recovery Rate (higher is better). 
  RIGA-Fold* consistently outperforms baselines across all length intervals.}
  \label{fig:combined_results}
\end{figure*}

\section{Experiments}\label{sec4}

\subsection{Benchmarks, Metrics, and Experimental Setup}

We formulate protein inverse folding as predicting an amino acid sequence $S = (s_1, \dots, s_N)$ given backbone coordinates $\mathbf{X} \in \mathbb{R}^{N \times 4 \times 3}$. We train on the standard CATH 4.2 benchmark~\cite{orengo1997cath} (18,024 topology-partitioned chains) and evaluate zero-shot generalization on the TS50 and TS500 datasets~\cite{li2014direct}. 

Performance is assessed via \textit{Perplexity (PPL)}, the exponential of cross-entropy loss where lower values indicate higher predictive confidence, and \textit{Sequence Recovery (Rec)}, the percentage of exact residue matches which reflects the model's ability to capture physicochemical constraints. Regarding model configuration, RIGA-Fold uses 5 message-passing layers (dim=128, $k=48$). The recycling mechanism ($T=3$) and Global Context Bridge are default. For RIGA-Fold*, frozen ESM-IF and ESM-2 embeddings are fused via the dual-stream module. Training details (noise, optimization, hardware) are provided in Appendix~\ref{sec:repro_details}.

Regarding the specific model configuration, RIGA-Fold is composed of 5 message-passing layers with node/edge dimensions of 128. We construct geometric graphs using $k=48$ neighbors. The \textit{Recycling} mechanism ($T=3$) and \textit{Global Context Bridge} are enabled by default. For the enhanced RIGA-Fold*, we fuse frozen embeddings from ESM-IF and ESM-2 via the dual-stream module. To conserve space, detailed training protocols (e.g., Gaussian noise injection, optimization schedules, and hardware specs) are provided in Appendix~\ref{sec:repro_details}.

\begin{table*}[t]
\caption{Ablation study of different architectural components on the CATH 4.2 dataset.}
\label{tab:ablation}
\centering
\begin{tabular}{@{}llccccccc@{}}
\toprule
\multirow{2}{*}{Category} & \multirow{2}{*}{Component} & \multicolumn{7}{c}{Model Configuration} \\
\cmidrule(lr){3-9}
 & & \textbf{RIGA-Fold} & Model 1 & Model 2 & Model 3 & Model 4 & Model 5 & Model 6 \\
\midrule
\multirow{5}{*}{Node} & GCN & & $\checkmark$ & & & & & \\
 & GAT & & & $\checkmark$ & & & & \\
 & QKV & & & & $\checkmark$ & & & \\
 & AttMLP & & & & & $\checkmark$ & & \\
 & GAU  & $\checkmark$ & & & & & $\checkmark$ & $\checkmark$ \\
\midrule
Edge & Edge Update & $\checkmark$ & $\checkmark$ & $\checkmark$ & $\checkmark$ & $\checkmark$ & $\checkmark$ & \\
Global & Context Bridge & $\checkmark$ & $\checkmark$ & $\checkmark$ & $\checkmark$ & $\checkmark$ & & \\
\midrule
\multirow{2}{*}{Results} & Perplexity $\downarrow$ & \textbf{4.13} & 4.67 & 5.91 & 4.46 & 5.27 & 5.03 & 5.05\\
 & Recovery \% $\uparrow$ & \textbf{55.05} & 53.62 & 44.20 & 53.01 & 52.96 & 48.57 & 47.94\\
\bottomrule
\end{tabular}
\end{table*}
\subsection{Main Results on CATH and Generalization Benchmarks}

The comparative results on the CATH 4.2 in-distribution test set are summarized in Table~\ref{tab1}, while the zero-shot generalization performance on TS50 and TS500 is reported in Table~\ref{tab2}.

Regarding the geometric foundation, as detailed in Table~\ref{tab1}, the base RIGA-Fold model achieves a notable performance advantage over purely geometric baselines, surpassing PiFold and ProteinMPNN across all metrics. Specifically, on the \textit{Short} and \textit{Single-chain} subsets, RIGA-Fold reduces perplexity by a distinct margin compared to PiFold. This improvement is directly attributable to our Geometric Attention Update (GAU). While standard GNNs often treat edge features as static auxiliary inputs, our ``Edge-as-Key'' mechanism actively utilizes spatial constraints to modulate information flow. This allows the model to capture fine-grained residue packing patterns that are often missed by conventional neighborhood aggregation, thereby resolving the local structural ambiguity that limits standard baselines.

The incorporation of evolutionary priors in RIGA-Fold* further elevates performance to a new state-of-the-art. RIGA-Fold* consistently achieves the highest recovery rates and lowest perplexity on the full CATH test set. The substantial performance gap between RIGA-Fold* and the base model verifies that inverse folding is an underspecified problem where geometry alone is insufficient. By introducing the Iterative Refinement mechanism, RIGA-Fold* effectively leverages the "semantic intuition" from ESM-2 to correct errors that are geometrically plausible but evolutionarily unlikely. This confirms that our closed-loop dual-stream fusion successfully bridges the gap between structural constraints and sequence semantics.

Finally, the results on the TS50 and TS500 datasets (Table~\ref{tab2}) highlight the robustness of our approach. Standard methods often experience a sharp performance drop on these zero-shot benchmarks due to overfitting to the CATH training distribution. In contrast, RIGA-Fold and RIGA-Fold* maintain high recovery rates. This robustness validates the efficacy of our Global Context Bridge. Unlike standard message passing which is confined to local receptive fields, our global gating mechanism captures long-range allosteric dependencies that are conserved across different protein folds. Consequently, our model generalizes better to unseen topologies, avoiding the "long-range forgetting" issue prevalent in local GNNs.

\subsection{Robustness Across Varying Protein Lengths}
To comprehensively evaluate the advantages of our proposed methods, we analyzed performance across subsets of varying sequence lengths in both the TS500 and CATH4.2 datasets, as summarized in Figure~\ref{fig:combined_results}.

We categorized the sequences in the TS500 dataset into intervals of 100--200, 200--300, 300--400, and over 400 residues. The results indicate that RIGA-Fold and RIGA-Fold* consistently outperform the baselines (PiFold and ScFold) across all intervals. Notably, the performance gap widens as the sequence length increases. For proteins with over 400 residues, standard GNNs often suffer from performance degradation due to the "long-range forgetting" problem. In contrast, RIGA-Fold maintains high recovery rates. We attribute this robustness to two factors: the Global Context Bridge, which effectively shortcuts long-range dependencies, and the Recycling strategy, which iteratively corrects errors that typically accumulate in longer chains.

A similar analysis on the large-scale CATH4.2 test set reinforces these findings. While PiFold and ScFold show a performance dip for sequences in the 300--400 range, RIGA-Fold* maintains the lowest perplexity and highest recovery rates. The base RIGA-Fold model generally ranks second, indicating that even without evolutionary priors, our geometric recycling architecture provides a solid foundation for modeling complex topologies. These findings collectively indicate that the Recurrent Interaction mechanism is crucial for maintaining predictive stability across varying chain lengths.

\subsection{Ablation Study: Impact of Geometric and Global Components}

We conducted systematic experiments to evaluate the individual and collective contributions of our core architectural modules, specifically the Geometric Attention Update (GAU), Edge Update, and Global Context Bridge. Detailed comparisons are provided in Table~\ref{tab:ablation}. To assess the efficacy of our specialized attention design, we first benchmarked the GAU against standard graph operators, including GCN, GAT, and the AttMLP module used in PiFold. We observed that substituting our GAU with generic GCN or standard QKV attention resulted in distinct performance degradation, with the GAT variant suffering the most severe decline. This trend suggests that conventional mechanisms, which rely primarily on semantic node similarity, fail to capture the strict structural constraints required for protein design. Furthermore, while the AttMLP baseline served as a strong reference, RIGA-Fold's GAU maintained a clear advantage. This validates that simply processing geometric features is insufficient; explicitly modeling pairwise geometric interactions via the \textit{edge-as-key} design is crucial for high-fidelity sequence recovery.

Beyond the node aggregation mechanism, we further investigated the impact of auxiliary geometric and global components. The removal of the Edge Update mechanism resulted in the lowest performance among all ablated variants. This collapse is theoretically expected: since edges in our framework encode rigid SE(3)-invariant geometry, removing their dynamic updates causes the model to lose track of the precise 3D conformation during deep message passing, confirming that iterative geometric refinement is the foundational bedrock of the model. Similarly, ablating the Global Context module precipitated a substantial drop in accuracy. This indicates that a purely local geometric encoder, regardless of its precision, hits a "topology bottleneck" due to a limited receptive field. Our Attention-based Global Context Bridge effectively circumvents this limitation by establishing a "virtual highway" for information flow. Ultimately, the results demonstrate a clear functional hierarchy where optimal performance is achieved only when the Edge Updates, GAU, and Global Context Bridge function in unison to integrate geometric constraints with long-range dependencies.

\begin{figure*}[t]
    \centering
    \begin{minipage}{0.30\textwidth}
        \centering
        \includegraphics[width=1.0\linewidth]{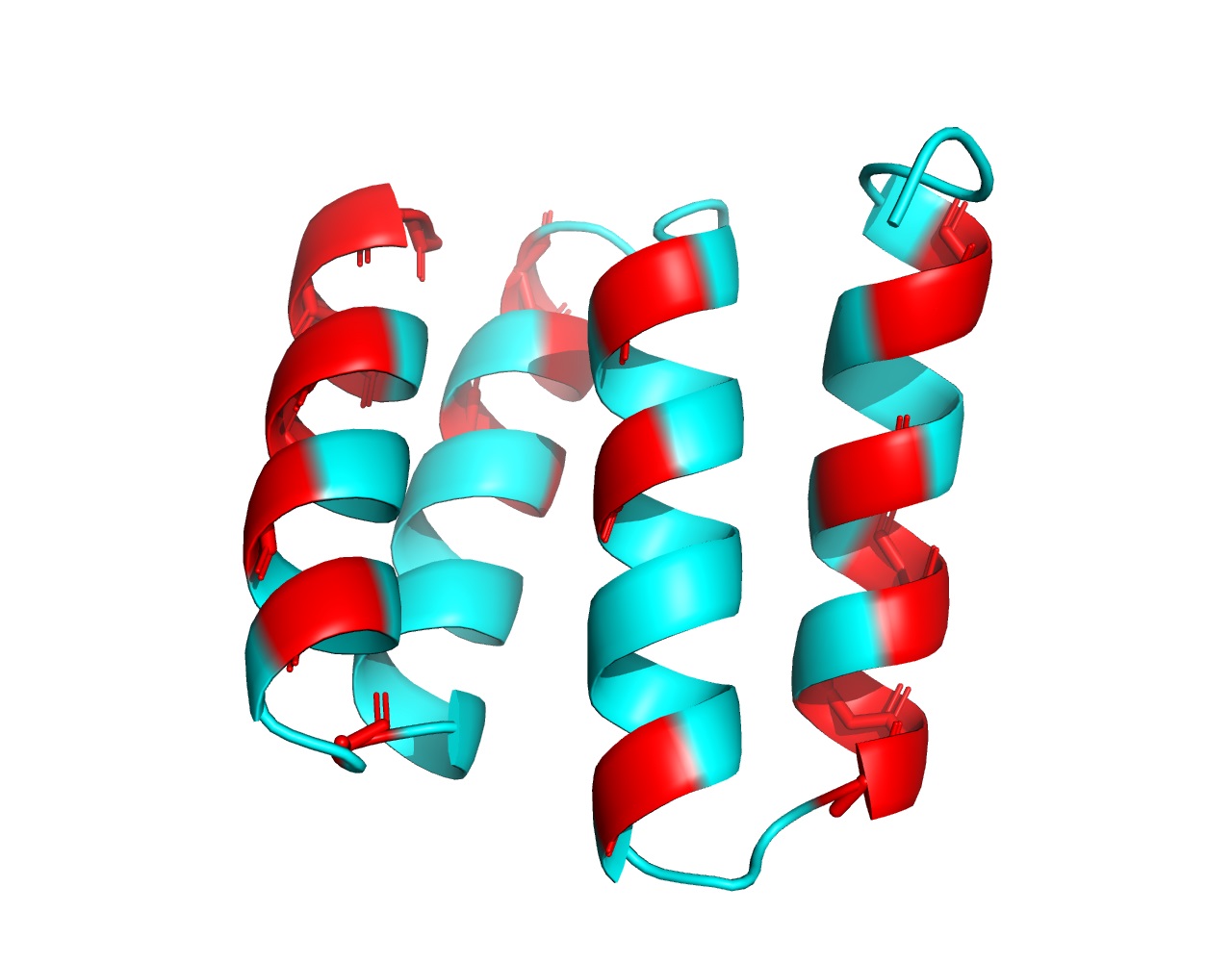} 
        \vspace{-0.2cm}
    \end{minipage}%
    \hfill
    \begin{minipage}{0.67\textwidth}
        \scriptsize \ttfamily 
        \setlength{\tabcolsep}{0.8pt} 
        \textbf{Part 1 (Residues 1-34):} \\
        \begin{tabular}{ll}
            \textbf{Target:} & \textcolor{green!60!black}{A E A W Y N L G N A Y Y K Q G D Y D E A I E Y Y Q K A L E L D P R S} \\
            \textbf{RIGA*:}   & \textcolor{black}{A} \textcolor{black}{E} \textcolor{black}{A} \textcolor{black}{W} \textcolor{black}{Y} \textcolor{black}{N} \textcolor{black}{L} \textcolor{black}{G} \textcolor{black}{N} \textcolor{black}{A} \textcolor{black}{Y} \textcolor{black}{Y} \textcolor{black}{K} \textcolor{red}{K} \textcolor{black}{G} \textcolor{black}{D} \textcolor{black}{Y} \textcolor{black}{D} \textcolor{red}{Q} \textcolor{black}{A} \textcolor{black}{I} \textcolor{black}{E} \textcolor{black}{Y} \textcolor{black}{Y} \textcolor{black}{Q} \textcolor{black}{K} \textcolor{black}{A} \textcolor{black}{L} \textcolor{black}{E} \textcolor{black}{L} \textcolor{black}{D} \textcolor{black}{P} \textcolor{red}{N} \textcolor{black}{S} \\
            \textbf{ScFold:} & \textcolor{red}{Y} \textcolor{red}{Q} \textcolor{red}{E} \textcolor{red}{Y} \textcolor{black}{Y} \textcolor{red}{K} \textcolor{red}{T} \textcolor{black}{G} \textcolor{black}{N} \textcolor{red}{K} \textcolor{black}{Y} \textcolor{black}{Y} \textcolor{red}{S} \textcolor{red}{K} \textcolor{black}{G} \textcolor{red}{N} \textcolor{black}{Y} \textcolor{black}{D} \textcolor{red}{D} \textcolor{black}{A} \textcolor{black}{I} \textcolor{black}{E} \textcolor{black}{Y} \textcolor{black}{Y} \textcolor{red}{K} \textcolor{red}{Q} \textcolor{black}{A} \textcolor{red}{I} \textcolor{red}{A} \textcolor{red}{Q} \textcolor{black}{D} \textcolor{black}{P} \textcolor{red}{K} \textcolor{black}{S} \\
            \textbf{PiFold:} & \textcolor{black}{A} \textcolor{red}{R} \textcolor{red}{K} \textcolor{red}{Y} \textcolor{red}{E} \textcolor{red}{E} \textcolor{red}{N} \textcolor{black}{G} \textcolor{red}{D} \textcolor{red}{K} \textcolor{black}{Y} \textcolor{red}{F} \textcolor{black}{K} \textcolor{red}{E} \textcolor{black}{G} \textcolor{red}{N} \textcolor{black}{Y} \textcolor{red}{K} \textcolor{red}{A} \textcolor{black}{A} \textcolor{black}{I} \textcolor{red}{A} \textcolor{black}{Y} \textcolor{black}{Y} \textcolor{red}{K} \textcolor{red}{A} \textcolor{black}{A} \textcolor{black}{L} \textcolor{red}{K} \textcolor{red}{E} \textcolor{red}{N} \textcolor{black}{P} \textcolor{red}{N} \textcolor{black}{S} \\
        \end{tabular}
        
        \vspace{0.2cm} 
        
        \textbf{Part 2 (Residues 35-68):} \\
        \begin{tabular}{ll}
            \textbf{Target:} & \textcolor{green!60!black}{A E A W Y N L G N A Y Y K Q G D Y D E A I E Y Y Q K A L E L D P R S} \\
            \textbf{RIGA*:}   & \textcolor{black}{A} \textcolor{black}{E} \textcolor{black}{A} \textcolor{red}{Y} \textcolor{black}{Y} \textcolor{black}{N} \textcolor{black}{L} \textcolor{black}{G} \textcolor{black}{N} \textcolor{black}{A} \textcolor{black}{Y} \textcolor{black}{Y} \textcolor{black}{K} \textcolor{red}{K} \textcolor{black}{G} \textcolor{black}{D} \textcolor{black}{Y} \textcolor{black}{D} \textcolor{black}{E} \textcolor{black}{A} \textcolor{black}{I} \textcolor{black}{E} \textcolor{black}{Y} \textcolor{black}{Y} \textcolor{red}{N} \textcolor{black}{K} \textcolor{black}{A} \textcolor{black}{L} \textcolor{black}{E} \textcolor{black}{L} \textcolor{black}{D} \textcolor{black}{P} \textcolor{red}{S} \textcolor{red}{W}  \\
            \textbf{ScFold:} & \textcolor{black}{A} \textcolor{red}{S} \textcolor{black}{A} \textcolor{black}{W} \textcolor{black}{Y} \textcolor{red}{D} \textcolor{black}{L} \textcolor{black}{G} \textcolor{black}{N} \textcolor{black}{A} \textcolor{black}{Y} \textcolor{black}{Y} \textcolor{red}{L} \textcolor{black}{Q} \textcolor{black}{G} \textcolor{black}{D} \textcolor{red}{F} \textcolor{red}{S} \textcolor{red}{K} \textcolor{black}{A} \textcolor{red}{E} \textcolor{red}{G} \textcolor{black}{Y} \textcolor{red}{F} \textcolor{red}{M} \textcolor{black}{K} \textcolor{black}{A} \textcolor{red}{Q} \textcolor{red}{A} \textcolor{black}{L} \textcolor{black}{D} \textcolor{black}{P} \textcolor{red}{E} \textcolor{red}{W}  \\
            \textbf{PiFold:} & \textcolor{black}{A} \textcolor{red}{A} \textcolor{black}{A} \textcolor{red}{Y} \textcolor{red}{F} \textcolor{black}{N} \textcolor{black}{L} \textcolor{black}{G} \textcolor{red}{Y} \textcolor{red}{S} \textcolor{black}{Y} \textcolor{red}{A} \textcolor{red}{M} \textcolor{red}{K} \textcolor{black}{G} \textcolor{red}{N} \textcolor{red}{I} \textcolor{red}{S} \textcolor{red}{H} \textcolor{black}{A} \textcolor{red}{E} \textcolor{red}{L} \textcolor{black}{Y} \textcolor{red}{F} \textcolor{red}{E} \textcolor{red}{Q} \textcolor{black}{A} \textcolor{red}{K} \textcolor{red}{R} \textcolor{black}{L} \textcolor{black}{D} \textcolor{black}{P} \textcolor{red}{S} \textcolor{black}{S}  \\
        \end{tabular}
    \end{minipage}

    \caption{\textbf{Qualitative comparison on the short-chain target 2avp.A.} 
    \textbf{(Left)} 3D structure visualization where residues correctly predicted by RIGA-Fold* but missed by ScFold are highlighted in red sticks. This confirms that our model effectively captures the \textit{hydrophobic core packing} constraints.
    \textbf{(Right)} Full sequence comparison. The sequence is split into two parts for visualization. RIGA-Fold* achieves a high recovery rate of 88.2\% with only minor errors, whereas baselines (ScFold, PiFold) fail to reconstruct significant portions of the sequence (marked in red).}
    \label{fig:case_study}
\end{figure*}

\begin{figure*}[t!]
    \centering
    \renewcommand{\arraystretch}{1.15}
    \setlength{\tabcolsep}{4pt} 
    
    \begin{subfigure}{0.325\textwidth} 
        \centering
        \includegraphics[width=\linewidth]{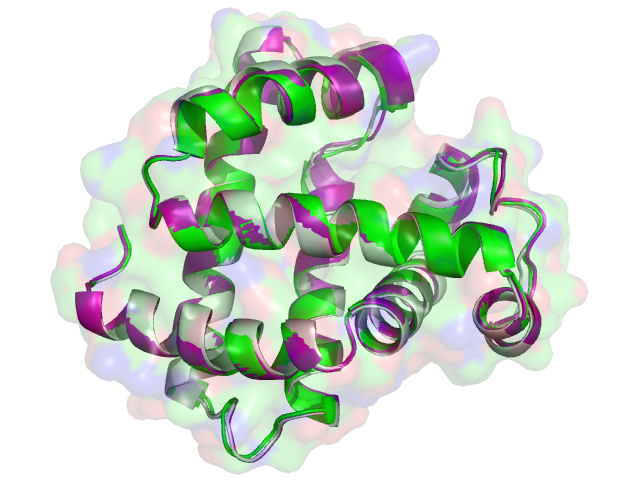} 
        \vspace{-0.2cm}
        
        \footnotesize 
        \begin{tabular}{lcc}
            \multicolumn{3}{c}{\textbf{All-alpha structure: 1a6m}} \\ 
            Method & Rec(\%) & RMSD \\
            RIGA & 66.89 & 0.380 \\
            RIGA* & 68.87 & 0.364 
        \end{tabular}
        \label{fig:1a0c}
    \end{subfigure}
    \hfill 
    \begin{subfigure}{0.325\textwidth} 
        \centering
        \includegraphics[width=\linewidth]{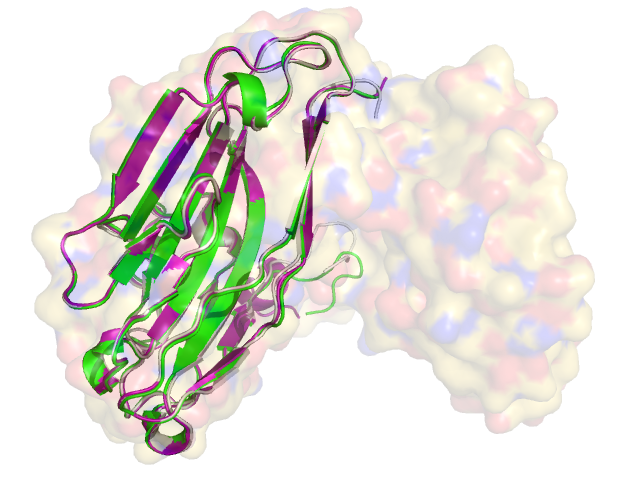} 
        \vspace{-0.2cm}
        
        \footnotesize 
        \begin{tabular}{lcc}
            \multicolumn{3}{c}{\textbf{All-beta structure: 2giy}} \\ 
            Method & Rec(\%) & RMSD \\
            RIGA & 65.92 & 0.412 \\
            RIGA* & 74.30 & 0.347
        \end{tabular}
        \label{fig:2giy}
    \end{subfigure}
    \hfill 
    \begin{subfigure}{0.325\textwidth} 
        \centering
        \includegraphics[width=\linewidth]{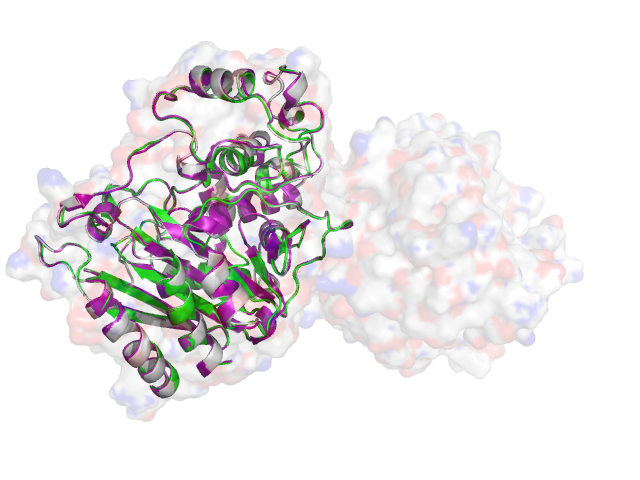} 
        \vspace{-0.2cm}
        
        \footnotesize 
        \begin{tabular}{lcc}
            \multicolumn{3}{c}{\textbf{Mixed structure: 2ffy}} \\ 
            Method & Rec(\%) & RMSD \\
            RIGA & 68.72 & 0.248 \\
            RIGA* & 72.35 & 0.239
        \end{tabular}
        \label{fig:2ffy}
    \end{subfigure}
    
    \caption{\textbf{Structural validity verification using AlphaFold3.} 
    The native structures are shown in green, 
    while the backbones predicted by the base RIGA-Fold and the enhanced RIGA-Fold* are shown in gray and purple, respectively. 
    RIGA-Fold* consistently achieves higher sequence recovery and lower RMSD compared to the base model, particularly on complex all-$\beta$ topologies.}
    \label{fig:structure_results} 
\end{figure*}

\subsection{Qualitative Analysis: Case Study on TPR Superhelix}

To provide an intuitive understanding of why RIGA-Fold* outperforms baselines, we conducted a visual case study on the target 2avp.A, a consensus TPR superhelix designed to test structural stability~\cite{main_eriks_2003} (Figure~\ref{fig:case_study}). This target features a solenoid topology where stability depends critically on the precise packing of hydrophobic residues between stacked helices.

As shown in Figure~\ref{fig:case_study} (Left), we visualize the performance gap by highlighting residues where RIGA-Fold* predicted the correct amino acid while the strongest baseline, ScFold, failed (red sticks). These errors are notably concentrated in the inter-helical interface, which forms the hydrophobic core of the protein. ScFold and PiFold struggle here because standard GNNs often focus on local neighbors and miss the non-local geometric constraints required for tight core packing.

The sequence alignment in Figure~\ref{fig:case_study} (Right) further quantifies this gap in terms of sequence recovery. RIGA-Fold* achieves a recovery rate of 88.2\%, correctly identifying key hydrophobic residues (e.g., Leucine, Isoleucine) that anchor the structure. In contrast, ScFold (54.4\%) and PiFold (41.2\%) produce sequences with high error rates (marked in red), failing to maintain the physicochemical consistency required for this fold. This qualitative evidence confirms that our Global Context Bridge and Geometric Attention effectively capture the long-range dependencies essential for modeling complex superhelical topologies.

\subsection{Structural Validity and Folding Consistency}

To verify biophysical validity, we folded generated sequences using AlphaFold3 for representative targets (all-$\alpha$ 1a6m, all-$\beta$ 2giy, and mixed 2ffy). As visualized in Figure~\ref{fig:structure_results}, where the native structure (green) is superimposed with predictions from RIGA-Fold (gray) and RIGA-Fold* (purple), our enhanced model achieves superior structural fidelity, particularly in topologically demanding cases. In the mixed-fold target (2ffy), evolutionary priors act as semantic regularizers, constraining the search space to favor native-like conformations with lower RMSD. Furthermore, for the complex all-$\beta$ target (2giy), the Global Context Bridge effectively captures the non-local interactions essential for $\beta$-sheet formation. This confirms that our model successfully resolves long-range allostery often missed by standard GNNs, ensuring sequences respect global constraints.

\section{Conclusion}
In this paper, we introduced RIGA-Fold, a geometric deep learning framework that unifies SE(3)-invariant edge updates with an iterative recycling strategy for protein inverse folding. By explicitly modeling geometry-as-key attention and bridging global context, our method effectively resolves the oversquashing and error accumulation issues prevalent in prior GNNs. Extensive evaluations on CATH and TS500 benchmarks confirm that enhancing geometric reasoning with evolutionary priors (RIGA-Fold*) sets a new state-of-the-art.

Despite the significant accuracy boost provided by the recycling mechanism, it incurs a linear increase in inference latency. To address this efficiency trade-off, future work will focus on distilling the iterative knowledge into a streamlined single-pass model. Furthermore, we plan to extend the RIGA-Fold framework to co-design tasks, broadening its applicability to scenarios where backbone structures are flexible rather than fixed.

\bibliographystyle{icml2026}
\bibliography{references}


\newpage
\appendix
\onecolumn
\section{Extended Related Work}
\label{sec:appendix_related_work}
\subsection{Deep Learning for Protein Inverse Folding}
Early approaches to computational protein design relied on physics-based energy functions and sampling algorithms, exemplified by RosettaDesign~\cite{leaver2011rosetta3}. While effective, these methods are computationally intensive and sensitive to energy landscape approximations. The advent of deep learning shifted the paradigm toward learning statistical potentials directly from structural data. Initial attempts utilized Convolutional Neural Networks (CNNs) on voxel grids~\cite{anand2022protein} or Recurrent Neural Networks (RNNs) on linearized structures~\cite{bepler2019learning}.

Currently, Graph Neural Networks (GNNs) constitute the state-of-the-art. Structured Transformer~\cite{ingraham2019generative} pioneered the use of node-edge aggregating graphs. GVP-GNN~\cite{jing2020learning} introduced geometric vector perceptrons to learn rotation-equivariant features. ProteinMPNN~\cite{dauparas2022robust} refined this by employing explicit backbone noise, achieving robust performance. More recently, PiFold~\cite{gao2022pifold} and VFN~\cite{mao2023novo} improved computational efficiency. Specifically, ScFold~\cite{zhong2025scfold} introduced an Overlapping Spatial Reduction (OSR) mechanism to efficiently capture local dependencies in short-chain proteins. 

However, a common limitation in these attention-based models is the treatment of geometric edges. Typically, edge features are merely added as bias terms to the attention scores or utilized in auxiliary MLPs, rather than explicitly serving as the driving force (i.e., Keys) for information routing. Furthermore, most methods operate under a ``single-pass'' paradigm, lacking the ability to correct initial errors, a gap we address via our Geometric Attention Update (GAU) and Iterative Self-Correction strategy.

\subsection{Modeling Long-Range Dependencies in Proteins}
The protein folding problem is inherently non-local; a mutation at one site can affect stability at a distant site through allosteric networks. Standard Message Passing Neural Networks (MPNNs) suffer from oversquashing~\cite{alon2020bottleneck}, where information from distant nodes is exponentially diluted after multiple aggregation steps.

To address this, several architectures have incorporated global reasoning. GraphTrans~\cite{ingraham2019generative} appends a Transformer encoder, but the quadratic complexity ($O(N^2)$) limits scalability. Other approaches employ hierarchical pooling or virtual nodes~\cite{li2021training}. However, simplistic global pooling strategies, such as the mean-pooling based global module employed in ScFold~\cite{zhong2025scfold}, tend to obliterate distinct local features, leading to feature redundancy (oversmoothing). Our work differs by introducing an Attention-based Global Context Bridge. Instead of static pooling, we use a lightweight ($O(N)$) attention mechanism that acts as a learnable soft gate. This dynamically regulates the injection of global topology, ensuring that long-range signals are preserved and selectively fused without overwhelming local geometric details.

\subsection{Pretraining and Generative Priors}
Large-scale Protein Language Models (PLMs), such as ESM-2~\cite{lin2023evolutionary}, have demonstrated remarkable success in capturing evolutionary semantics. Similarly, structure-based pretraining, such as ESM-IF~\cite{hsu2022learning}, learns generic inverse folding potentials.

Recent trends have focused on adapting these pretrained priors. LM-Design~\cite{zheng2023structure} typically employs a ``two-stage'' approach, sequentially refining sequences generated by a structural adapter using a frozen PLM. Knowledge-Design~\cite{gao2024kwdesign} uses knowledge distillation. Distinct from sequential post-processing or distillation, our RIGA-Fold treats pretrained embeddings as intrinsic ``dual-stream'' inputs. We fuse frozen priors directly into the deep geometric reasoning process. This allows the geometric encoder to iteratively ``query'' evolutionary knowledge during the message passing itself, effectively bridging the gap between rigid structural constraints and flexible sequence semantics.

\section{Theoretical Analysis: Graph Constraints and Solutions}
\label{sec:appendix_theory}

In this section, we provide a rigorous graph-theoretic derivation of the architectural choices in RIGA-Fold. We analyze the contact-graph pathologies that typically hinder deep message passing in protein inverse folding—namely oversmoothing and oversquashing—and demonstrate how our Directional Edge Updates and Global Context Bridge mathematically mitigate these issues.

\subsection{Preliminaries and Notation}
We model a protein as an undirected graph $G=(V,E)$ with $n=|V|$ residues (nodes) and contacts (edges) $(i,j)\in E$. The combinatorial Laplacian is $L=D-A$. For each undirected edge, we consider two oriented channels $i\!\to\!j$ and $j\!\to\!i$.

\textbf{Layer States.} In RIGA-Fold, at layer $l$, node $i$ carries a state $h_i^{\,l}\in\mathbb{R}^d$ and each oriented channel $i\!\to\!j$ has an \emph{independent} edge state $e_{i\to j}^{\,l}\in\mathbb{R}^{d_e}$.
For a receiver $i$, the attention weight to a neighbor $j$ is derived from our GAU module:
\begin{equation}
\label{eq:attn_def}
a_{ij}^l \;=\; \mathrm{softmax}_j(s_{ij}^l) \;=\; \frac{\exp(s_{ij}^l)}{\sum_{k\in\mathcal{N}(i)} \exp(s_{ik}^l)}, \quad \text{where } s_{ij}^l = (q_i^l)^\top k_{ij}^l.
\end{equation}

\textbf{Two-step Return Mass.} To quantify immediate backtracking on length-2 cycles $i\!\to\!j\!\to\!i$ (a primary cause of oversmoothing), we use the per-node return mass $r_i^{\,l}=\sum_{j\in\mathcal{N}(i)} a_{ij}^l a_{ji}^l$ and its graph average $r^{\,l} = \frac{1}{n}\sum_{i\in V} r_i^{\,l}$.

\subsection{Pathologies in Protein Contact Graphs}

We identify two critical topological bottlenecks that degrade performance in standard GNNs when applied to protein structures.

\paragraph{Pathology 1: Fast Local Mixing (Oversmoothing).}
Protein contact graphs are geometric $k$-NN graphs characterized by high local clustering coefficients. In such topologies, the random walk transition probability mass tends to concentrate within the local neighborhood.
Mathematically, if the spectral gap of the normalized Laplacian is large, the mixing time is short. Standard message passing operations act as low-pass filters; repeated application without directional constraints leads to the rapid collapse of node feature variance, rendering deep networks indistinguishable from shallow ones. This is the \textit{oversmoothing} phenomenon where $\lim_{l \to \infty} \mathbf{h}_i^l \approx \mathbf{h}_j^l$.

\paragraph{Pathology 2: Narrow Bridges (Oversquashing).}
Proteins often rely on long-range interactions (e.g., disulfide bonds or packing between distant helices) to stabilize their tertiary structure. In the graph domain, these interactions correspond to edges connecting distinct clusters. These "bridge" edges often constitute cuts with low conductance $\Phi_G$.
According to Cheeger's inequality, a low conductance implies a high mixing time bottleneck. Information attempting to cross these bottlenecks gets exponentially compressed, a phenomenon known as \textit{oversquashing}. Our Global Context Bridge mitigates this by artificially introducing edges that increase the effective conductance of the graph.

\section{Detailed Proofs for Geometric Attention Update (GAU)}
\label{sec:proof_edge}

Our Geometric Attention Update (GAU) employs directional edge features to suppress the "echo chamber" effect described above.

\subsection{Design Intuition: Decoupling Forward and Backward Channels}
The core design choice in RIGA-Fold is to maintain \emph{independent} directed edge states ($e_{i\to j} \neq e_{j\to i}$) and update them directionally.
Consider the Jacobian of the update function. A perturbation in the \emph{forward} channel at layer $l$ ($s_{i\to j}^{\,l}$) should not immediately alter the \emph{reverse} logit ($s_{j\to i}^{\,l}$) within the same layer step. In our directional update:
\begin{equation}\label{eq:zero-cross}
\frac{\partial s_{j\to i}^{\,l+1}}{\partial s_{i\to j}^{\,l}} \;=\; 0.
\end{equation}
This independence breaks the immediate feedback loop. We now provide the formal proof of how this leads to contraction of the return mass.

\subsection{Formal Proofs}

\begin{lemma}[Softmax product sensitivity]
Let $a_{ij}=\mathrm{softmax}_j(s_{ij})$. Then for any neighbor $j$ with a reverse edge $(j,i)$, the sensitivity of the return probability product is:
\begin{equation}
\big|\Delta(a_{ij} a_{ji})\big| 
\;\le\; a_{ij}\, \phi^{(j)}_i\, |\Delta s_{ji}| \;+\; a_{ji}\, \phi^{(i)}_j\, |\Delta s_{ij}|,
\end{equation}
where $\phi^{(j)}_i = a_{ji}(1-a_{ji})$ is the softmax slope.
\end{lemma}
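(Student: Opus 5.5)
The plan is to read the statement as a perturbation bound in which only the two logits of the $2$-cycle $i\!\to\!j\!\to\!i$ move, namely $s_{ij}\mapsto s_{ij}+\Delta s_{ij}$ and $s_{ji}\mapsto s_{ji}+\Delta s_{ji}$. All other logits in the rows $\mathcal{N}(i)$ and $\mathcal{N}(j)$ are held fixed. Two structural facts make this clean.
\begin{itemize}
\item $a_{ij}$ depends only on the logits of row $i$, and $a_{ji}$ only on those of row $j$.
\item Within its own row, the diagonal softmax derivative is $\partial a_{ij}/\partial s_{ij}=a_{ij}(1-a_{ij})=\phi^{(i)}_j$, and symmetrically $\partial a_{ji}/\partial s_{ji}=a_{ji}(1-a_{ji})=\phi^{(j)}_i$.
\end{itemize}
Hence $a_{ij}$ is insensitive to $s_{ji}$ and $a_{ji}$ is insensitive to $s_{ij}$. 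This is the same decoupling that the paper encodes in \eqref{eq:zero-cross}.

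First, I would use the exact product-difference identity
\begin{equation*}
\Delta(a_{ij}a_{ji}) \;=\; a_{ij}\,\Delta a_{ji} \;+\; a_{ji}\,\Delta a_{ij} \;+\; \Delta a_{ij}\,\Delta a_{ji}.
\end{equation*}
Second, I would bound each single-row change. By the mean value theorem applied to the one-variable map $s_{ij}\mapsto a_{ij}$, we get $\Delta a_{ij}=\tilde a_{ij}(1-\tilde a_{ij})\,\Delta s_{ij}$, with the slope evaluated at an intermediate logit. The analogous statement holds for $a_{ji}$. To first order in the perturbation, the slopes are the base-point values $\phi^{(i)}_j$ and $\phi^{(j)}_i$, and the cross term $\Delta a_{ij}\Delta a_{ji}$ is second order. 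Third, the triangle inequality then gives exactly the claimed $|\Delta(a_{ij}a_{ji})|\le a_{ij}\phi^{(j)}_i|\Delta s_{ji}|+a_{ji}\phi^{(i)}_j|\Delta s_{ij}|$.

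The main obstacle is that the bound, with slopes evaluated at the base point, is not literally true for finite perturbations, since the slope $a(1-a)$ varies along the path. I would therefore state the lemma as the linearized (differential) bound, i.e.\ $|d(a_{ij}a_{ji})|\le a_{ij}\phi^{(j)}_i|ds_{ji}|+a_{ji}\phi^{(i)}_j|ds_{ij}|$, which is exact as an equality-up-to-sign of the differential. I would then add the remark that, for finite steps, the same inequality holds with two changes:
\begin{itemize}
\item each $\phi$ is replaced by its value at an intermediate point;
\item the remaining prefactor is taken after the update, using the grouping $a_{ij}'\Delta a_{ji}+a_{ji}\Delta a_{ij}$ instead of the symmetric expansion. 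This grouping absorbs the cross term exactly.
\end{itemize}
As a fallback yielding a uniform finite-step statement, I would use $\phi\le 1/4$ together with $a\le 1$. Whichever form is adopted, the key point used later (contraction of the return mass $r_i$) only needs that each logit enters through a single factor, weighted by the opposite attention weight.
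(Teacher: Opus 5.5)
The paper states this lemma without any proof; only Lemma~\ref{lem:reverse-contract} receives an argument, and that one is heuristic. So there is no authors' route to compare with. Judged on its own, your argument is correct and is the natural one: softmax weights depend only on their own row, the diagonal derivative is $a_{ij}(1-a_{ij})=\phi^{(i)}_j$, and the product rule plus the triangle inequality finish it.

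Your main caveat is not a technicality but a real defect of the printed statement, and you should record an explicit counterexample. Take $\mathcal{N}(i)=\{j,k\}$ with $s_{ik}=0$, move $s_{ij}$ from $-\log 3$ to $0$, and leave row $j$ untouched. Then $a_{ij}$ goes from $1/4$ to $1/2$, so the left side equals $a_{ji}/4$, while the right side is $a_{ji}\cdot\tfrac{3}{16}\log 3\approx 0.206\,a_{ji}$. More generally, any sufficiently small move of $s_{ij}$ alone that pushes $a_{ij}$ toward $1/2$ violates the bound, because the logistic slope increases along such a path. So your fix is the right one: either the differential inequality, or your finite version with intermediate slopes and the grouping $a'_{ij}\Delta a_{ji}+a_{ji}\Delta a_{ij}$, which is an exact identity.

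\textbf{Freeze the other logits explicitly.} The assumption that every other logit in rows $i$ and $j$ is frozen does real work and must be stated as a hypothesis. It fails in the paper's own setting: $s_{ik}=q_i^\top k_{ik}$, so any perturbation of $h_i$ moves the whole row through $q_i$. Then $\Delta a_{ij}$ acquires the off-diagonal terms $-a_{ij}a_{ik}\Delta s_{ik}$, which the stated bound omits. A version robust to this follows from $da_{ij}=a_{ij}\sum_{k\neq j}a_{ik}(ds_{ij}-ds_{ik})$, which gives $|da_{ij}|\le\phi^{(i)}_j\max_{k\neq j}|ds_{ij}-ds_{ik}|$. In words, the lemma holds if $|\Delta s_{ij}|$ is read as the largest change of $s_{ij}$ relative to the other logits of its row; this reading also respects the shift invariance of softmax.

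\textbf{This is not the decoupling of \eqref{eq:zero-cross}.} The fact you use, that $a_{ij}$ does not see $s_{ji}$, holds within a single layer for every softmax attention, symmetric edges included. By contrast, \eqref{eq:zero-cross} is a cross-layer claim about the directional edge update. So this lemma by itself gives no support to the directional-versus-symmetric contrast the paper draws next.

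\textbf{Keep the weights in the fallback.} Your $\phi\le 1/4$, $a\le 1$ fallback is valid, but it discards the opposite-weight factors you say are needed downstream. Bound only the slopes by $1/4$ and keep $a'_{ij}$ and $a_{ji}$.
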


\begin{lemma}[Reverse-logit contraction under directional update]
\label{lem:reverse-contract}
Assume the MLPs in the edge update are $L$-Lipschitz. Under the directional update rule of RIGA-Fold, there exists a constant $0\le \rho<1$ such that:
\begin{equation}
\big| \Delta s_{ji}^{\,l+1} \big| \;\le\; \rho\, \big| \Delta s_{ij}^{\,l} \big| \;+\; C\,\Xi^l,
\end{equation}
where $\Delta s$ denotes variation induced by the forward transfer $i\!\to\!j$, and $\Xi^l$ collects residual terms.
\end{lemma}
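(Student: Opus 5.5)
We prove Lemma~\ref{lem:reverse-contract} by tracing how a perturbation $\Delta s_{ij}^{\,l}$ of the forward logit reaches the reverse logit $s_{ji}^{\,l+1}=(q_j^{l+1})^\top k_{ji}^{l+1}$ one layer later. A first-order expansion gives
\begin{equation*}
\Delta s_{ji}^{\,l+1} \;=\; (\Delta q_j^{l+1})^\top k_{ji}^{l+1} \;+\; (q_j^{l+1})^\top \Delta k_{ji}^{l+1} \;+\; O(\|\Delta\|^2).
\end{equation*}
The two first-order terms are controlled separately.

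\textbf{Key term.} Since $k_{ji}^{l+1}=W_K e_{j\to i}^{l+1}$, and the directional update writes $e_{j\to i}^{l+1}$ only from $(h_j^l,h_i^l,e_{j\to i}^l)$, there is no path from the forward channel state $e_{i\to j}^l$ to $e_{j\to i}^{l+1}$. This is the zero cross-derivative \eqref{eq:zero-cross}. The only remaining dependence is indirect, through $h_i^l$ and $h_j^l$. Perturbing these states by $\Delta h$ moves the key by at most $\|W_K\|\,L\,\|\Delta h\|$, using that $\mathrm{MLP}_{edge}$ is $L$-Lipschitz. We put this contribution into $\Xi^l$.

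\textbf{Query term.} The query is $q_j^{l+1}=W_Q h_j^{l+1}$, where $h_j^{l+1}$ aggregates $\sum_k a_{jk}^l v_{jk}^l$. A change $\Delta s_{ij}^l$ alters $a_{ij}^l$, which is row $i$ of the softmax, not row $j$. So it reaches $h_j^{l+1}$ only through $v_{ji}^l$, which depends on $h_i^{l}$, or through $h_i^{l+1}$ at the next step. The direct effect on node $i$'s new state is bounded by $\|\partial a_{i\cdot}/\partial s_{ij}\|\cdot\max_k\|v_{ik}\|\,|\Delta s_{ij}^l|$. The softmax slope from the preceding lemma (Softmax product sensitivity) bounds the first factor by $a_{ij}(1-a_{ij})\le 1/4$. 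The effect then passes through $W_V$ and $W_Q$, and is multiplied by $\|k_{ji}^{l+1}\|$ and by the output gate of the Global Context Bridge, whose sigmoid has values in $(0,1)$.

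Combining the two terms gives
\begin{equation*}
|\Delta s_{ji}^{\,l+1}| \;\le\; \rho\,|\Delta s_{ij}^l| + C\,\Xi^l,
\qquad
\rho=\tfrac14\,\|W_Q\|\,\|W_V\|\,L\,B_v B_k\,\bar\sigma ,
\end{equation*}
where $B_v,B_k$ bound the values and keys and $\bar\sigma<1$ is the maximal gate value.

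\textbf{Main obstacle.} Showing $\rho<1$ is the hard step, because it is not automatic. I would first try adding explicit hypotheses: bounded weight norms, which layer normalization and weight decay help justify, together with the $\sqrt{d_k}$ scaling in the logits. I would state these hypotheses as part of the meaning of ``there exists $\rho<1$.'' If that fails, the fallback is to define $\rho$ as the product above, show it is strictly less than the corresponding bound for undirected (tied) edge updates, and keep $\rho<1$ as an explicit assumption. All second-order terms, and all contributions through other neighbors and the global pool, are collected into $\Xi^l$.
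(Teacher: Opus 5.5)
Your proposal does not prove the lemma, and the query-side path that produces your $\rho$ does not exist in the architecture.

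\textbf{The missing step.} The lemma's only real content is the existence of $\rho<1$, and your proposal does not establish it. You end by offering either to assume it or to add weight and feature bounds that are not in the statement. From ``$\mathrm{MLP}_{edge}$ is $L$-Lipschitz'' alone, a product like $\tfrac14\|W_Q\|\|W_V\|LB_vB_k\bar\sigma$ can take any positive value (rescale $W_Q$), so this route cannot close. The paper's proof is not quantitative either. It simply asserts that the reverse key $k_{ji}^{l+1}$ depends on the forward query source $h_i^l$ only indirectly, through message aggregation that ``dilutes the signal'' to a sensitivity below $1$ after normalization, whereas a shared edge state $e_{ij}^{l+1}$ would give $\rho\approx1$. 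Note where it puts the contraction: on the key side. That is exactly the term you discarded into $\Xi^l$, and it is also where the tied-versus-directional contrast you want for your fallback lives. The paper's wording (``using the aggregated context'', dependence that ``passes through the message aggregation'') also suggests that $\mathrm{MLP}_{edge}$ reads post-aggregation states. You instead assume $e_{j\to i}^{l+1}$ is built only from $(h_i^l,h_j^l,e_{j\to i}^l)$.

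\textbf{The query-term chain.} A change in $s_{ij}^l$ moves only row $i$ of the softmax, hence node $i$'s aggregated state. But $q_j^{l+1}=W_Qh_j^{l+1}$ reads row $j$, the values $v_{jk}^l$, and $g_{pool}$. Under your reading of the edge update, ``through $h_i^{l+1}$ at the next step'' only affects layer $l+2$. So the only first-order link from $s_{ij}^l$ to $q_j^{l+1}$ is $g_{pool}$, which you placed in $\Xi^l$. The $v_{ji}^l$ route is driven by $\Delta h_i^l$, which $|\Delta s_{ij}^l|$ does not control. A perturbation of $h_i^l$ orthogonal to $W_Q^\top k_{ij}^l$ leaves $s_{ij}^l$ fixed but still moves $v_{ji}^l$ and $e_{j\to i}^{l+1}$; the same objection applies to the paper's key-side sentence. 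With your own bookkeeping, the first-order coefficient of $|\Delta s_{ij}^l|$ is therefore zero. The displayed $\rho$ also counts $W_V$ twice (explicitly and inside $B_v$), introduces $L$ with no edge-MLP step in the chain, and treats the output gate as a constant factor $\bar\sigma$ even though $\sigma(\mathrm{MLP}_{out}(z))$ depends on the state it gates.

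\textbf{What a non-vacuous version needs.} Consistent bookkeeping gives $\rho=0$, which ``proves'' the lemma only because $\Xi^l$ is left undefined; the statement as written is vacuous. A meaningful version requires three things. First, define $\Xi^l$ explicitly. Second, keep the genuine $s_{ij}^l$-dependence out of $\Xi^l$. It runs either through $g_{pool}$, where the pooling weight $\alpha_i$ of node $i$ supplies a real dilution factor, or through $\mathrm{MLP}_{edge}$ if it reads aggregated states; the latter is where your factors $a_{ij}(1-a_{ij})$ and $L$ belong. Third, add norm hypotheses under which the resulting product is provably below $1$.
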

\begin{proof}
The dependence of the reverse key $k_{ji}^{l+1}$ on the forward query source $h_i^l$ is \emph{indirect}. It passes through the message aggregation which dilutes the signal. Since the edge state $e_{j\to i}^{l+1}$ does not linearly amplify the specific forward-channel logit $s_{ij}^l$ within the same layer, the chain rule gives a sensitivity strictly below $1$ after normalization. In contrast, a symmetric update would share $e_{ij}^{l+1}$, leading to $\rho \approx 1$.
\end{proof}

\begin{proposition}[Layerwise contraction of $r^l$]
Combining the lemmas above, it can be shown that the expected two-step return mass contracts across layers:
\begin{equation}
\mathbb{E}\!\big[r^{\,l+1}\big] \;\le\; \kappa\,\mathbb{E}\!\big[r^{\,l}\big], \quad \text{where } \kappa \in (0,1).
\end{equation}
\end{proposition}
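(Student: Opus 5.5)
The plan is to telescope the return mass through one layer and control each term with the two lemmas above. Write $r^{l+1}-r^{l} = \frac{1}{n}\sum_{i}\sum_{j\in\mathcal{N}(i)} \Delta(a_{ij}a_{ji})$, where $\Delta$ denotes the change from layer $l$ to layer $l+1$. The aim is to bound the new product $a_{ij}^{l+1}a_{ji}^{l+1}$ by a factor strictly smaller than the old product, plus residual terms that can be absorbed.

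First, for each oriented pair I apply the softmax product sensitivity lemma. This bounds $|\Delta(a_{ij}a_{ji})|$ by $a_{ij}\phi^{(j)}_i|\Delta s_{ji}| + a_{ji}\phi^{(i)}_j|\Delta s_{ij}|$. Second, I substitute Lemma~\ref{lem:reverse-contract} into the reverse-logit term to get $|\Delta s_{ji}^{l+1}|\le \rho|\Delta s_{ij}^{l}| + C\Xi^l$. The symmetric statement with $i$ and $j$ swapped handles the other term. The zero cross-derivative \eqref{eq:zero-cross} is what lets me treat the two directions as decoupled to first order.

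Third, I use $\phi = a(1-a)\le a$ to turn the bound into a multiple of $a_{ij}a_{ji}$ itself. Summing over $j$ and averaging over $i$ should then give $r^{l+1}\le (1-\eta(1-\rho))\,r^l + C'\,\overline{\Xi}^l$. Here $\eta>0$ is a lower bound on the relevant logit drift relative to the return mass. I will take expectations over the data, or over the noise-augmented inputs, and assume $\mathbb{E}[\Xi^l]\le \epsilon\,\mathbb{E}[r^l]$ for small $\epsilon$. The bound then becomes $\mathbb{E}[r^{l+1}]\le \kappa\,\mathbb{E}[r^l]$ with $\kappa = 1-\eta(1-\rho)+C'\epsilon$. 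This lies in $(0,1)$ as long as $C'\epsilon<\eta(1-\rho)$, and $\kappa>0$ holds trivially once $\eta(1-\rho)<1$.

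The main obstacle is the sign problem. The lemmas bound only the \emph{magnitude} of the change, while the proposition claims a \emph{decrease}. To get past this, I would first argue that the GAU update drives the reverse logit away from its previous value in the direction of the forward transfer, so that the first-order change in $a_{ij}a_{ji}$ is nonpositive. If that argument fails, I would fall back on assuming this as a hypothesis on the learned weights. A second obstacle is controlling the residuals $\Xi^l$ proportionally to $r^l$. I would try to bound them using the Lipschitz constant $L$ of the edge MLPs together with the dilution factor $1/|\mathcal{N}(i)|$ coming from $k$-NN aggregation, where $k=48$. If that does not work, I would state the bound as an explicit assumption, which makes $\kappa$ depend on $L$, $k$, and $\rho$.
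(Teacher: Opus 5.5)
Your proposal follows the route the paper gestures at. The paper itself offers nothing beyond ``combining the lemmas above, it can be shown''. Making that route explicit shows it fails at the one step that matters, the passage to $r^{l+1}\le(1-\eta(1-\rho))\,r^l$.

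If you actually carry out your first three steps, the sensitivity lemma with $\phi=a(1-a)\le a$ gives $|\Delta(a_{ij}a_{ji})|\le a_{ij}a_{ji}\bigl(|\Delta s_{ij}|+|\Delta s_{ji}|\bigr)$. Substituting the reverse-logit lemma then yields
\[
a^{l+1}_{ij}a^{l+1}_{ji}\;\le\;a^{l}_{ij}a^{l}_{ji}\,\bigl(1+(1+\rho)\,|\Delta s_{ij}|+C\,\Xi^l\bigr),
\]
and this factor is at least $1$. Bounds on $|\Delta|$ only limit how far $r$ can move, in either direction. A strict contraction needs a \emph{signed lower bound} on the decrease, proportional to $a_{ij}a_{ji}$, and neither lemma supplies one. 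Your $\eta$ is exactly that missing statement, brought in as an assumption. The sign hypothesis alone would give at most $r^{l+1}\le r^l$ to first order.

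The role you give $\rho$ is also inverted. A small $\rho$ means the reverse logit barely responds, which shrinks the admissible change of the product in both directions. Breaking a feedback loop can at best prevent growth of $r$; it cannot force decay.

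Two further mismatches:
\begin{itemize}
\item The $\Delta$ in the reverse-logit lemma is a perturbation response, a Jacobian-type statement like the zero cross-derivative identity. It is not the layer increment $s^{l+1}-s^l$ you telescope over.
\item The sensitivity lemma is first order, so applying it to finite layer-to-layer changes needs a remainder term.
\end{itemize}

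More decisively, the conclusion does not follow from the lemmas at all, so no control of $\Xi^l$ through $L$ and $k$ can rescue the argument. Set $W_Q=0$ (or $W_K=0$) in every layer. Then all logits vanish and $a^l_{ij}=1/\deg(i)$ for every input. Hence $r^l=\frac{1}{n}\sum_{i}\sum_{j\in\mathcal{N}(i)}\frac{1}{\deg(i)\deg(j)}>0$ is identical at every layer. Both lemmas hold, the second with $\rho=0$ and $\Xi^l=0$, yet $\mathbb{E}[r^{l+1}]=\mathbb{E}[r^l]$, so no $\kappa<1$ exists.

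A correct statement therefore needs an explicit hypothesis on the learned weights that forces each layer's attention to become less reciprocal by a fixed fraction. Once you assume that per-pair decrease, summing it over pairs is the proposition, and the two lemmas play no role. As written, neither your proposal nor the paper's one-line assertion establishes the claim.
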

\textbf{Implication:} This contraction slows the drift toward low-frequency subspaces, allowing RIGA-Fold to employ deeper encoders ($N=5$) without suffering from feature collapse.


\section{Detailed Proofs for Global Context Bridge}
\label{sec:proof_global}

The Global Context Bridge introduces a "virtual node" mechanism to alleviate the oversquashing pathology.

\subsection{Rank-One Update and Resistance Reduction}
Linearizing the layer map, the Global Context Bridge introduces a rank-one term to the Jacobian of the graph update. On the attention-induced measurement graph, this is equivalent to adding a low-rank update to the Laplacian:
\begin{equation}
L_w^{\,l}\ \longmapsto\ L_w^{\,l}\;+\;\alpha\,\alpha^\top,
\end{equation}
where $\alpha$ scales with the gate strength.

\begin{theorem}[Resistance Monotonicity]
For any pair of nodes $u,v \in V$, the effective resistance on the updated graph $R'_{\mathrm{eff}}(u,v)$ satisfies:
\begin{equation}
R'_{\mathrm{eff}}(u,v) \;\le\; R_{\mathrm{eff}}(u,v).
\end{equation}
\end{theorem}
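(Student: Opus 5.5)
The plan is to argue through the variational (quadratic-form) characterization of effective resistance, treating $L_w^{\,l}$ as the Laplacian of the attention-induced measurement graph with nonnegative weights. Write $b_{uv}=\mathbf{e}_u-\mathbf{e}_v$. For a connected weighted graph,
\[
R_{\mathrm{eff}}(u,v)=b_{uv}^\top (L_w^{\,l})^{+} b_{uv}
=\sup_{x\perp \mathbf{1}}\frac{(b_{uv}^\top x)^2}{x^\top L_w^{\,l} x},
\]
which is the Dirichlet-principle form. The update $L_w^{\,l}\mapsto L'=L_w^{\,l}+\alpha\alpha^\top$ adds a positive semidefinite term, so $x^\top L' x\ge x^\top L_w^{\,l} x$ for every $x$. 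Consequently each ratio in the supremum can only decrease, which gives $R'_{\mathrm{eff}}(u,v)\le R_{\mathrm{eff}}(u,v)$. Equivalently, one can use the L\"owner-order statement $L'\succeq L_w^{\,l}\Rightarrow (L')^{+}\preceq (L_w^{\,l})^{+}$ on the common range.

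The ordered steps are as follows.

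\textbf{(i)} Fix the setting. Assume $G$ is connected, so $\ker L_w^{\,l}=\mathrm{span}(\mathbf{1})$ for strictly positive attention weights on edges. Define resistance via the pseudoinverse restricted to $\mathbf{1}^\perp$.

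\textbf{(ii)} Ensure the kernel does not change. The virtual-node interpretation suggests $\alpha\perp\mathbf{1}$, or else one replaces $\alpha$ by its projection $P\alpha$ with $P=I-\tfrac1n\mathbf{1}\mathbf{1}^\top$. Then $L'$ still annihilates $\mathbf{1}$ and has range $\mathbf{1}^\perp$. If $\alpha\not\perp\mathbf{1}$, $L'$ becomes nonsingular and is no longer a Laplacian. In that case I would define $R'$ via $b^\top (L')^{-1} b$ and note that the same monotonicity holds, since $L'\succeq L$ on all vectors and $b\in\mathbf{1}^\perp$.

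\textbf{(iii)} Derive the quantitative Sherman–Morrison form on $\mathbf{1}^\perp$:
\[
R'_{\mathrm{eff}}(u,v)=R_{\mathrm{eff}}(u,v)-\frac{\big(b_{uv}^\top L^{+}\alpha\big)^2}{1+\alpha^\top L^{+}\alpha}.
\]
This gives the inequality directly and shows that the decrease is strict whenever $\alpha$ has a component aligned with the $u$–$v$ potential.

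\textbf{(iv)} Remark that replacing $\alpha\alpha^\top$ by an actual virtual-node star, with weights $w_i\ge0$ and the extra node eliminated by a Schur complement, yields again $L+D_w-\tfrac{ww^\top}{\sum w}\succeq L$. This is Rayleigh monotonicity, since adding edges never increases resistance.

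The main obstacle is step (ii): justifying that the linearized Global Context Bridge really produces a PSD, kernel-compatible perturbation. The gates are sigmoid outputs in $(0,1)$ and the pooling weights are softmax, hence nonnegative, so the virtual-node/Schur-complement model in (iv) is the honest route. I would try first to show that the bridge's Jacobian is $P\,\mathrm{diag}(\gamma)\,\mathbf{1}\mathbf{1}^\top$-like with nonnegative $\gamma$, which then corresponds to a star of nonnegative conductances. Failing that, I would state the theorem under the explicit hypothesis that the perturbation is PSD, which is all the variational argument needs.
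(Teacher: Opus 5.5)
Your proof is correct. Its main line differs from the paper's, although your step (iii) is exactly the paper's argument.

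The paper applies Sherman--Morrison to $L_w^{\,l}+\alpha\alpha^\top$ ``on the subspace orthogonal to $\mathbf{1}$''. It notes that the subtracted rank-one term is positive semidefinite, concludes $(L_w^{\,l}+\alpha\alpha^\top)^{-1}\preceq (L_w^{\,l})^{-1}$, and evaluates the quadratic form at $b_{uv}$.

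Your primary argument instead uses the Cauchy--Schwarz/Dirichlet characterization $b^\top L^{+}b=\sup_{x}(b^\top x)^2/(x^\top Lx)$ for $b\in\mathrm{range}(L)$. This needs only $L'-L\succeq 0$, which buys generality in two ways:
\begin{itemize}
\item It covers perturbations of any rank, including your star model in (iv). There $D_w-ww^\top/\sum_i w_i$ is the Laplacian of the star--mesh clique and is not rank one.
\item It never requires the perturbation to preserve $\mathbf{1}^\perp$. The paper writes $(L_w^{\,l})^{-1}$ for a singular matrix and restricts to $\mathbf{1}^\perp$ without noting that this is only meaningful when $\alpha\perp\mathbf{1}$. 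You isolate that condition correctly in (ii).
\end{itemize}
In exchange, Sherman--Morrison gives the exact decrement $\frac{(b_{uv}^\top L^{+}\alpha)^2}{1+\alpha^\top L^{+}\alpha}$. That yields a criterion for strict decrease, which the variational bound alone does not.

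One refinement to (ii): if $\alpha^\top\mathbf{1}\neq 0$, the inequality holds with equality. For any $x$, the shifted vector $x'=x-\frac{\alpha^\top x}{\alpha^\top\mathbf{1}}\mathbf{1}$ has the same $b_{uv}^\top x$ and $x^\top Lx$, but $\alpha^\top x'=0$. So the supremum for $L+\alpha\alpha^\top$ is at least the one for $L$. Replacing $\alpha$ by $P\alpha$ is therefore a change of model, not a harmless normalization. Your strictness remark in (iii) applies only when $\alpha\perp\mathbf{1}$.

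Your concern in the last paragraph is also justified. With $\alpha\perp\mathbf{1}$, $\alpha\alpha^\top$ is a graph Laplacian only when $\alpha\propto\mathbf{e}_a-\mathbf{e}_b$ for some pair $a,b$; otherwise some off-diagonal entry $\alpha_i\alpha_j$ is positive. So the paper's rank-one term is PSD but does not correspond to nonnegative conductances. The theorem rests on PSD-ness alone, and your construction (iv) is the faithful virtual-node model.
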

\begin{proof}
Using the Sherman--Morrison identity on the subspace orthogonal to $\mathbf{1}$:
\begin{equation}
\big(L_w^{\,l}+\alpha\alpha^\top\big)^{-1}
\;=\;
(L_w^{\,l})^{-1}
\;-\;
\frac{(L_w^{\,l})^{-1}\alpha\,\alpha^\top (L_w^{\,l})^{-1}}{1+\alpha^\top (L_w^{\,l})^{-1}\alpha}.
\end{equation}
Since the second term is positive semidefinite, we have the Loewner order inequality:
$(L_w^{\,l}+\alpha\alpha^\top)^{-1} \preceq (L_w^{\,l})^{-1}$.
Consequently, the quadratic form $b_{uv}^\top (L)^{-1} b_{uv}$ (which defines resistance) can only decrease or stay equal.
\end{proof}

\begin{proposition}[Two-hop effective path]
Let $G^l$ be the global context vector. The gated injection $h_i \leftarrow h_i + \sigma(\cdot)G^l$ ensures that for any source node $j$ and receiver $i$, there exists a path $j\!\to\!G^l\!\to\!i$ contributing to $h_i^{l+1}$ with at most two hops. This effectively shortcuts the topological bottlenecks described in Pathology 2.
\end{proposition}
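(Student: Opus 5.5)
The argument unrolls the layer map, tracks one fixed source node through the pooled vector, and checks that the path it exposes has two hops and nonvanishing weight. Throughout, the gated injection is taken in the additive form stated in the proposition, $h_i^{l+1} = h_i^{\mathrm{loc},l} + \sigma(\gamma_i)\,G^l$. Here $G^l = \mathbf{g}_{pool} = \sum_{k\in V}\alpha_k\odot(\mathbf{W}_{val}h_k^{\mathrm{loc},l})$ is the attention-pooled vector of the Global Context Bridge, and $\gamma_i$ is the gate pre-activation produced by $\mathrm{MLP}_{in}$/$\mathrm{MLP}_{out}$. Formally, I would view the computation at layer $l$ as a directed computation graph. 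Its nodes are the residues $V$ plus one virtual node $G^l$. Each $k\in V$ sends an arc $k\to G^l$, because $G^l$ depends on $h_k^{\mathrm{loc},l}$ through the pooling sum. The virtual node sends an arc $G^l\to i$ to each $i\in V$, because of the injection term. In this graph, $j\to G^l\to i$ is a path of length two for every pair $(j,i)$, irrespective of their distance in the $k$-NN contact graph $G$.

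The second step is to show that this path actually contributes, that is, that $\partial h_i^{l+1}/\partial h_j^{\mathrm{loc},l}$ contains a term that does not vanish generically. Differentiating gives
\begin{equation*}
\frac{\partial h_i^{l+1}}{\partial h_j^{\mathrm{loc},l}} \;\ni\; \mathrm{diag}\big(\sigma(\gamma_i)\big)\,\frac{\partial G^l}{\partial h_j^{\mathrm{loc},l}},
\end{equation*}
and differentiating the softmax pooling gives
\begin{equation*}
\frac{\partial G^l}{\partial h_j^{\mathrm{loc},l}} = \mathrm{diag}(\alpha_j)\,\mathbf{W}_{val} + \sum_{k}\mathrm{diag}\big(\mathbf{W}_{val}h_k^{\mathrm{loc},l}\big)\,\frac{\partial\alpha_k}{\partial h_j^{\mathrm{loc},l}}.
\end{equation*}
Softmax weights are strictly positive, and the sigmoid takes values in $(0,1)$. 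So the leading term $\mathrm{diag}(\sigma(\gamma_i))\,\mathrm{diag}(\alpha_j)\,\mathbf{W}_{val}$ is nonzero whenever $\mathbf{W}_{val}\neq 0$. The remaining terms are smooth in the parameters, so the full Jacobian block is nonzero for all parameters outside a measure-zero set. Its size is bounded below by roughly $\min_i\sigma(\gamma_i)\cdot\alpha_j\cdot\sigma_{\min}(\mathbf{W}_{val})$, up to the softmax correction.

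Third, I would compare this with pure message passing. There, $\partial h_i^{l+m}/\partial h_j^{l}$ vanishes identically when $m<d_G(i,j)$, and it is bounded by products of attention weights along paths otherwise, which is the usual oversquashing bound. The two-hop path therefore gives sensitivity of order $\Theta(\alpha_j\sigma(\gamma_i))$ at depth one, for any pair $(j,i)$. This is the formal content of ``shortcutting'' Pathology 2. The connection to the preceding Resistance Monotonicity theorem is that adding the virtual node is a rank-one Laplacian update, so effective resistances can only go down.

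The main obstacle is a mismatch between formulations. The actual bridge in Section~\ref{sec:encoder} is multiplicative, $h_i^{out}=h_i^{local}\odot\sigma(\mathrm{MLP}_{out}(z_i))$, and it is not the additive injection assumed in the statement. In the multiplicative form, $G^l$ enters only through the gate. I would handle this by showing that the path still exists: $\partial h_i^{out}/\partial G^l = \mathrm{diag}\big(h_i^{local}\odot\sigma'(\cdot)\big)\,\partial\mathrm{MLP}_{out}/\partial z_i\cdot\partial z_i/\partial G^l$, which is generically nonzero because $\sigma'>0$ and the MLPs are generically of full rank. Unlike the additive case, this contribution vanishes when $h_i^{local}=0$, so I would state the result as holding for generic states.
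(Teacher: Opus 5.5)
The paper gives no proof of this proposition. Its only justification is the one your first step makes precise: $G^l$ pools over every residue and is broadcast back to every residue, so $j\to G^l\to i$ is a two-hop path in the computation graph with one virtual node. Your argument is correct and goes beyond this in two useful ways.

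First, the Jacobian computation is what earns the word ``contributing'', since reachability alone does not rule out a structurally dead path. Second, your point about the mismatch is right. The bridge defined in Section~\ref{sec:encoder} is multiplicative, $\mathbf{h}_i^{out}=\mathbf{h}_i^{local}\odot\sigma(\mathrm{MLP}_{out}(\mathbf{z}_i))$. So $G^l$ can only rescale existing coordinates of $\mathbf{h}_i^{local}$ by factors in $(0,1)$, and its influence is weighted coordinatewise by $|\mathbf{h}_i^{local}|$ and by $\sigma'\le 1/4$. The additive formula $h_i+\sigma(\cdot)G^l$ in the statement therefore does not describe the layer defined in the main text. Restricting the multiplicative case to generic states, as you do, is the correct fix.

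Three small repairs are needed.

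(i) Smoothness in the parameters does not make the exceptional set null. You need real-analyticity, which holds for sigmoid, softmax and analytic MLP activations but only cellwise for ReLU. You also need a witness point. Take $\mathbf{W}_{att}=0$, so that $\alpha_k\equiv\mathbf{1}/N$ and $\partial\alpha_k/\partial h_j=0$, and make the gate independent of $G^l$. There the block equals $\mathrm{diag}(\sigma(\gamma_i))\mathbf{W}_{val}/N\neq 0$.

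(ii) Adding a virtual node joined to all residues is not a rank-one Laplacian update. Eliminating it by a Schur complement adds $\mathrm{diag}(w)-ww^\top/(\mathbf{1}^\top w)$, the Laplacian of a weighted complete graph, which has rank $n-1$. The resistance decrease you want still holds, either by Rayleigh monotonicity or by the Loewner-order argument applied to this positive semidefinite update.

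(iii) The pooling weights satisfy $\sum_j\alpha_j=\mathbf{1}$, so your lower bound is typically $O(1/N)$ per source. All $N$ residues still pass through a single $d$-dimensional vector. Your comparison in step three therefore establishes depth-one reachability, not the absence of squashing. ``Effectively shortcuts'' should be read in that qualitative sense.
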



\section{Theoretical Analysis of Iterative Recycling}
\label{sec:recycling_theory}

Our Iterative Self-Correction strategy can be viewed as an alternating minimization process.
Let $\theta$ denote GNN parameters and $b$ boundary features (sequence priors) extracted from frozen pretrained encoders. Consider a surrogate objective $\mathcal{J}(\theta,b)$ that upper-bounds the token cross-entropy.

\begin{theorem}[Monotone improvement under recycling]
\label{thm:alt}
Consider the alternating updates:
\begin{equation}
\theta^{(t+1)} \in \arg\min_{\theta} \mathcal{J}(\theta,b^{(t)}), 
\qquad
b^{(t+1)} \in \mathcal{U}\big(b^{(t)}\big).
\end{equation}
Assuming the update of priors $b$ improves the boundary alignment (Boundary Monotonicity), the sequence $\{\mathcal{J}(\theta^{(t)},b^{(t)})\}_{t\ge 0}$ is non-increasing. Furthermore, the empirical perplexity $\mathrm{PPL}=\exp(\mathrm{CE})$ decreases in parallel.
\end{theorem}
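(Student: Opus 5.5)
The plan is the standard descent argument for block-coordinate (alternating) minimization. First I make the Boundary Monotonicity assumption precise, since the statement leaves it informal. I would adopt the weakest form that suffices: for every $t$ and every $b'\in\mathcal{U}(b^{(t)})$,
\[
\mathcal{J}(\theta^{(t+1)},b')\le \mathcal{J}(\theta^{(t+1)},b^{(t)}).
\]
That is, refreshing the ESM-2 prior from the newly decoded sequence $S_t$ does not increase the surrogate at the current parameters. I also assume each $\arg\min_\theta \mathcal{J}(\theta,b^{(t)})$ is nonempty, which holds, for example, if $\mathcal{J}(\cdot,b)$ is coercive or the parameters range over a compact set.

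The chain of inequalities then follows in three steps.

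\textbf{Step 1 ($\theta$-step).} By optimality of $\theta^{(t+1)}$ for the frozen prior $b^{(t)}$,
\[
\mathcal{J}(\theta^{(t+1)},b^{(t)})\le \mathcal{J}(\theta^{(t)},b^{(t)}).
\]
No convexity is needed; we only compare against the feasible point $\theta^{(t)}$.

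\textbf{Step 2 ($b$-step).} Boundary Monotonicity at $\theta^{(t+1)}$ gives
\[
\mathcal{J}(\theta^{(t+1)},b^{(t+1)})\le \mathcal{J}(\theta^{(t+1)},b^{(t)}).
\]

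\textbf{Step 3.} Chaining Steps 1 and 2 shows that $\{\mathcal{J}(\theta^{(t)},b^{(t)})\}$ is non-increasing. If in addition $\mathcal{J}\ge 0$, as a cross-entropy upper bound is, monotone convergence gives a limit.

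For the perplexity claim I would argue as follows. If $\mathcal{J}$ is itself the token cross-entropy $\mathrm{CE}$, the result is immediate because $\exp$ is strictly increasing. If $\mathcal{J}$ is only an upper bound, monotonicity of $\mathcal{J}$ does not transfer to $\mathrm{CE}$ in general. The honest conclusion is then that $\mathrm{PPL}^{(t)}\le \exp(\mathcal{J}(\theta^{(t)},b^{(t)}))$, so the perplexity is bounded by a non-increasing envelope. To obtain genuine monotone decrease, one needs the bound to be tight at the iterates, i.e.\ $\mathcal{J}(\theta^{(t)},b^{(t)})=\mathrm{CE}(\theta^{(t)},b^{(t)})$, in the spirit of a majorize–minimize scheme. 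Under that condition,
\[
\mathrm{CE}^{(t+1)}\le \mathcal{J}^{(t+1)}\le \mathcal{J}^{(t)}=\mathrm{CE}^{(t)},
\]
and exponentiating preserves the order.

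I expect the main obstacle to be the perplexity claim, not the descent chain: "decreases in parallel" requires this MM-type tightness assumption, which I would state explicitly. A secondary issue is that Boundary Monotonicity must be evaluated at the new parameters $\theta^{(t+1)}$, not at $\theta^{(t)}$; otherwise Steps 1 and 2 do not chain. Finally, the argument concerns the idealized alternating scheme rather than the jointly trained cascade with fixed $T=3$ used in practice.
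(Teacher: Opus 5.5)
The paper gives no proof of this theorem. It is stated and then followed only by the remark that it ``supports our empirical observation'' about $T=3$, and Boundary Monotonicity is only named parenthetically, never formalized. So there is no route of the paper's to compare against.

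Your descent chain $\mathcal{J}(\theta^{(t+1)},b^{(t+1)})\le\mathcal{J}(\theta^{(t+1)},b^{(t)})\le\mathcal{J}(\theta^{(t)},b^{(t)})$ is correct. Stating Boundary Monotonicity at the new parameters $\theta^{(t+1)}$ is exactly what makes the two steps chain. The notation $\mathcal{U}(b^{(t)})$ hides the fact that the refreshed prior depends on $\theta^{(t+1)}$ through the decoded sequence, so an iterate-dependent assumption is the right one.

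Your caution on the perplexity clause is also justified. It points to a defect in the statement, not in your proof. Take $\mathcal{J}=\mathrm{CE}+g$ with $g\ge 0$: then $\mathcal{J}$ can fall while $\mathrm{CE}$ rises, e.g.\ $\mathrm{CE}:1\to 2$ and $g:5\to 3$. So ``decreases in parallel'' needs either $\mathcal{J}\equiv\mathrm{CE}$ or your majorize--minimize tightness at the iterates. Even then, only non-increase follows, not strict decrease.

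Your closing remark is accurate as well. The theorem describes an idealized alternating scheme, whereas the actual method trains a cascade jointly; at inference $\theta$ is fixed and only the ESM-2 prior is recycled.
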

This theoretical result supports our empirical observation that multi-stage recycling ($T=3$) consistently improves recovery rates compared to single-pass inference.


\section{Algorithmic Details}
\label{sec:algo}

To facilitate reproducibility, we provide the complete inference procedure of RIGA-Fold* with the iterative self-correction strategy. Algorithm~\ref{alg:inference} details how geometric features are fused with evolutionary priors and refined through the recycling mechanism.

\begin{algorithm}[h]
   \caption{RIGA-Fold* Inference with Cascaded Recycling}
   \label{alg:inference}
\begin{algorithmic}[1]
   \STATE {\bfseries Input:} Protein Graph $\mathcal{G} = (\mathcal{V}, \mathcal{E})$, Backbone Coordinates $\mathbf{X}$, Sequence Length $L$, Recycling Steps $T=3$.
   \STATE {\bfseries Output:} Predicted Sequence $S_{final}$.
   
   \STATE \textbf{Phase 1: Feature Initialization}
   \STATE Construct Local Coordinate Systems $\mathbf{O}_i$ for all $i \in \mathcal{V}$
   \STATE $\mathbf{H}_{geom}, \mathbf{E}_{geom} \leftarrow \text{StructureEncoder}(\mathbf{X}, \mathcal{G})$ \COMMENT{Extract SE(3)-invariant features}
   \STATE $\mathbf{E}_{\text{IF}} \leftarrow \text{ESM-IF}(\mathbf{X})$ \COMMENT{Frozen Structure Prior}
   \STATE $\mathbf{E}_{\text{Seq}}^{(0)} \leftarrow \text{ESM-2}(\text{[MASK]}^L)$ \COMMENT{Initial Sequence Prior}
   
   \STATE \textbf{Phase 2: Iterative Refinement Loop}
   \FOR{$t=1$ {\bfseries to} $T$}
       \STATE \textit{Step 2.1: Dual-Stream Fusion}
       \FOR{$i=1$ {\bfseries to} $L$}
           \STATE $\mathbf{h}_{fusion, i}^{(t)} \leftarrow \text{Concat}(\mathbf{H}_{geom, i}, \mathbf{E}_{\text{IF}, i}, \mathbf{E}_{\text{Seq}, i}^{(t-1)})$
       \ENDFOR
       
       \STATE \textit{Step 2.2: Geometric Message Passing}
       \STATE $\mathbf{H}_{out}^{(t)} \leftarrow \text{RIGA-Layers}(\mathbf{h}_{fusion}^{(t)}, \mathbf{E}_{geom})$
       
       \STATE \textit{Step 2.3: Decoding \& Prediction}
       \STATE $\mathbf{P}^{(t)} \leftarrow \text{Softmax}(\text{Linear}(\mathbf{H}_{out}^{(t)}))$
       \STATE $S^{(t)} \leftarrow \text{ArgMax}(\mathbf{P}^{(t)})$
       
       \STATE \textit{Step 2.4: Recycling (Update Prior)}
       \IF{$t < T$}
           \STATE $\mathbf{E}_{\text{Seq}}^{(t)} \leftarrow \text{ESM-2}(S^{(t)})$ \COMMENT{Feed predicted sequence back}
       \ENDIF
   \ENDFOR
   
   \STATE $S_{final} \leftarrow S^{(T)}$
   \STATE \textbf{return} $S_{final}$
\end{algorithmic}
\end{algorithm}

\section{Reproducibility Details}
\label{sec:repro_details}

To ensure the reproducibility of our results and fair comparison with baselines, we provide the complete training protocols and architectural configurations used in RIGA-Fold and RIGA-Fold*.

\subsection{Detailed Training Protocol}
\textbf{Optimization Strategy.} We train our models using the AdamW optimizer with a base learning rate of $1\times 10^{-3}$ and weight decay of $1\times 10^{-4}$. We employ a Cosine Annealing scheduler with a warmup period of 1000 steps to stabilize early training dynamics. The training process is capped at 100 epochs with an early stopping mechanism (patience = 10 epochs) based on validation perplexity.

\textbf{Regularization and Robustness.} To prevent overfitting and enhance robustness against structural noise (common in crystal structures), we apply two key regularization techniques:
\begin{itemize}
    \item \textbf{Dropout:} A dropout rate of $p=0.1$ is applied to all hidden layers and attention weights.
    \item \textbf{Backbone Noise:} Following standard practices~\cite{dauparas2022robust}, we inject Gaussian noise sampled from $\mathcal{N}(0, 0.02^2)$ into the input backbone coordinates during training. This prevents the model from memorizing exact atomic positions and improves generalization to unseen folds.
\end{itemize}

\textbf{Hardware and Efficiency.} All experiments were conducted on a single NVIDIA RTX A6000 (48GB VRAM) GPU. We utilize Mixed Precision Training (FP16) to reduce memory usage and accelerate computation. The average training time for RIGA-Fold* on CATH 4.2 is approximately 24 hours.

\subsection{Architecture Configuration}
Table~\ref{tab:arch_config} details the specific hyperparameters used for the RIGA-Fold structure encoder.

\begin{table}[h]
\centering
\caption{Hyperparameters for RIGA-Fold structure encoder.}
\label{tab:arch_config}
\begin{tabular}{@{}ll@{\qquad}ll@{}}
\toprule
\textbf{Parameter} & \textbf{Value} & \textbf{Parameter} & \textbf{Value} \\
\midrule
Hidden dimension ($d$) & 128 & Message Passing Layers & 5 \\
Node feature dim & 128 & Edge feature dim & 128 \\
$k$-NN neighbors & 48 & Dropout rate & 0.1 \\
Directional Edges & Enabled & Global Context Bridge & Enabled \\
Virtual local axes & 3 & Recycling Iterations ($T$) & 3 \\
\bottomrule
\end{tabular}
\end{table}

\subsection{Feature Encodings}
\label{sec:features} 
\begin{table}[h]
\centering
\caption{Input feature details.}
\label{tab:features}
\begin{tabular}{@{}ll@{}}
\toprule
\textbf{Feature Type} & \textbf{Encoding Method} \\
\midrule
Intra-residue distances & Gaussian RBFs (Radial Basis Functions) \\
Backbone Dihedrals ($\phi, \psi, \omega$) & Trigonometric encoding $(\sin, \cos)$ \\
Inter-residue distances & Gaussian RBFs on heavy atoms ($N, C_\alpha, C, O$) \\
Relative Orientations & Quaternions projected to local frames \\
Secondary Structure & One-hot encoding (10 classes) \\
\bottomrule
\end{tabular}
\end{table}

\subsection{Optimization Setup}
\begin{table}[h]
\centering
\caption{Training settings.}
\label{tab:opt_hw}
\begin{tabular}{@{}ll@{\qquad}ll@{}}
\toprule
\textbf{Key} & \textbf{Value} & \textbf{Key} & \textbf{Value} \\
\midrule
Max epochs & 100 & Early stopping & 10 patience \\
Batch size & 32 & Base Learning rate & $1\times 10^{-3}$ \\
Optimizer & AdamW & Scheduler & Cosine Decay \\
Gradient clipping & Norm 1.0 & Precision & Mixed (FP16/FP32) \\
Hardware & NVIDIA RTX A6000 & VRAM Usage & $\sim$24GB \\
\bottomrule
\end{tabular}
\end{table}

\subsection{Data Preprocessing}
Strict consistency with baselines (PiFold, ProteinMPNN) was maintained:
\begin{itemize}
    \item \textbf{Split:} We use the standard CATH 4.2 split provided by Ingraham et al. (2019).
    \item \textbf{Filtering:} Residues with missing $C_\alpha$ atoms are removed.
    \item \textbf{Masking:} Non-standard residues are masked during training loss calculation but retained in the graph structure.
    \item \textbf{Graph Construction:} $k$-NN graph ($k=48$) is built based on Euclidean distances between $C_\alpha$ atoms.
\end{itemize}

\end{document}